\documentclass[twoside,11pt]{article}

\usepackage{blindtext}

\usepackage{jmlr2e}

\usepackage{amsmath,amsfonts,bm}

\def\eqref#1{equation~\ref{#1}}

\def\1{\bm{1}}

\def\vb{{\bm{b}}}

\def\vh{{\bm{h}}}

\def\vr{{\bm{r}}}
\def\vs{{\bm{s}}}

\def\vu{{\bm{u}}}
\def\vv{{\bm{v}}}
\def\vw{{\bm{w}}}
\def\vx{{\bm{x}}}

\def\vz{{\bm{z}}}

\DeclareMathAlphabet{\mathsfit}{\encodingdefault}{\sfdefault}{m}{sl}
\SetMathAlphabet{\mathsfit}{bold}{\encodingdefault}{\sfdefault}{bx}{n}
\newcommand{\tens}[1]{\bm{\mathsfit{#1}}}
\def\tA{{\tens{A}}}

\def\sC{{\mathbb{C}}}
\def\sD{{\mathbb{D}}}

\def\sI{{\mathbb{I}}}

\def\sL{{\mathbb{L}}}

\def\sR{{\mathbb{R}}}

\newcommand{\R}{\mathbb{R}}

\usepackage{booktabs}
\usepackage{hyperref}
\usepackage{url}
\usepackage{tikz}
\usepackage{cancel}
\usepackage{algorithm}
\usepackage{algpseudocode}
\usepackage{subcaption}
\usepackage{multirow}
\usepackage{xspace}
\usepackage{xcolor}
\usepackage{ifthen}

\newcommand{\dt}{\texttt{DTSemNet}\xspace} 

\usepackage{lastpage}
\jmlrheading{27}{2026}{1-\pageref{LastPage}}{8/25; Revised
4/26}{5/26}{25-2047}{Subrat Prasad Panda, Blaise Genest and Arvind Easwaran}

\ShortHeadings{Approximation-Free Differentiable Oblique Decision Trees}{Panda, Genest and Easwaran}
\firstpageno{1}

\begin{document}

\title{Approximation-Free Differentiable Oblique Decision Trees}

\author{%
  \name Subrat Prasad Panda \email subratpr001@e.ntu.edu.sg \\
  \addr College of Computing and Data Science\\
        Nanyang Technological University\\
        Singapore
  \AND
  \name Blaise Genest \email blaise.genest@cnrsatcreate.sg \\
  \addr CNRS@CREATE\\
        Singapore\\
        IPAL, CNRS\\
        France
  \AND
  \name Arvind Easwaran \email arvinde@ntu.edu.sg \\
  \addr College of Computing and Data Science\\
        Nanyang Technological University\\
        Singapore
}

\editor{Honglak Lee}

\maketitle

\begin{abstract}
Decision Trees (DTs) are widely used in safety-critical domains such as medical diagnosis, valued for their interpretability and effectiveness on tabular data. However, training accurate oblique DTs is challenging due to complex optimization landscapes and overfitting risks, particularly in regression. Recent advances have introduced differentiable formulations that enable gradient-based training and joint optimization of decision boundaries and leaf regressors. Yet, existing approaches typically rely on approximations, either through probabilistic softening of boundaries (soft DTs) or quantized gradients such as the Straight-Through Estimator (STE).
To overcome these limitations, we propose \dt, a novel, \underline{sem}antically equivalent, and invertible representation of hard oblique \underline{DT}s as neural \underline{net}works. \dt enables end-to-end training with standard gradient descent, eliminating the need for approximations in both classification and regression. While classification aligns naturally with this formulation, regression remains challenging due to the joint optimization of internal nodes and leaf regressors. To address this, we analyze the limitations of STE and introduce an annealed Top-$k$ method that provides accurate gradient signals without approximation.
Extensive experiments on classification and regression benchmarks show that \dt-trained oblique DTs outperform state-of-the-art differentiable DTs. Furthermore, we demonstrate that \dt can serve as programmatic DT policies in reinforcement learning environments, thereby broadening their applicability.
\end{abstract}

\begin{keywords}
  Oblique Decision Tree, Regression Trees, Differentiable Trees, Programmatic Policies, Programmatic Reinforcement Learning
\end{keywords}

\section{Introduction}\label{sec:intro}

Decision Trees (DTs) are widely used across various domains, including high-stakes applications such as medical diagnostics \citep{chen2017machine}. Recent studies \citep{grinsztajn2022tree} highlight that DTs often outperform neural networks (NNs) on tabular data, owing to their strong inductive bias for modeling non-smooth functions. Despite their advantages, learning optimal DTs remains a challenging task. The search space grows combinatorially with the number of features and decision points, which makes learning an optimal DT NP-hard \citep{laurent1976constructing}.

Existing methods for learning DTs can be broadly grouped into non-gradient-based and gradient-based approaches. Non-gradient methods include greedy algorithms like CART \citep{breiman1984cart}, global optimization via MIP \citep{bennett1996optimal} or evolutionary algorithms \citep{ea2023}, and non-greedy methods like TAO \citep{carreira2018alternating, tao-regression}. These methods either yield suboptimal trees \citep{zantedeschi2021learning, dgt2022} or scale poorly with tree complexity \citep{bertsimas2017optimal, dgt2022}. In contrast, gradient-based methods make DTs differentiable, enabling end-to-end training via backpropagation \citep{hazimeh2020tree, yang2018deep, zantedeschi2021learning}. These have shown improved efficiency and accuracy for both classification and regression tasks \citep{dgt2022, icct2022, marton2024gradtree}. 

To use gradient descent at the training stage on DTs, most previous works adopt ``soft'' decisions (e.g., using the \texttt{Sigmoid} activation function) to make the tree differentiable \citep{lee2019oblique,biau2019neural,zantedeschi2021learning,hazimeh2020tree}. However, the resulting tree does not provide ``hard'' decisions but ``soft'', that is, probabilistic ones. Hard DTs can be derived from soft DTs through hardening, albeit at the cost of accuracy.
Some recent studies, such as Dense Gradient Trees (DGT) \citep{dgt2022}, Interpretable Continuous Control Trees (ICCT) \citep{icct2022}, and GradTree \citep{marton2024gradtree, marton2025mitigating}, have introduced an alternative approach for obtaining hard DTs by using an approximation during backpropagation to compute gradients with Straight-Through Estimators (STEs) \citep{bengio2013estimating,hubara2016binarized}. 
This approximation hampers DT training as the {\em evaluation function} computing the output during forward propagation and the {\em optimizing function} computing gradient during backward propagation are different.
This is especially true in large datasets or in Reinforcement Learning (RL) tasks, where errors may accumulate over many training steps. 

In this work, we focus on producing an approximation-free architecture, that is an architecture where the {\em evaluation} and the {\em optimizing} functions are the same.\footnote{This was not the case for the regression task in the conference version \citep{Panda2024}}
We propose a novel encoding of oblique DTs as NNs, namely the \dt architecture, which overcomes the aforementioned shortcomings. It uses \texttt{ReLU} activation functions and linear operations, making it differentiable and enabling the use of vanilla gradient descent to train the DT. The proposed encoding is semantically equivalent to a hard oblique DT, with each decision (weight) in the DT corresponding one-to-one to a trainable weight in the NN, while all other weights in the architecture are fixed (non-trainable).
\dt can be used in both classification and regression settings. In classification, it selects the class with the highest output using a standard \texttt{argmax} operation, and we show in Theorem \ref{th1} that, for any given input, the output of \dt is identical to that of the DT. In regression, each leaf of the DT is associated with a regressor, and \dt selects the appropriate leaf regressor, which is then used to compute the final output.

Training DTs for regression is more challenging than in the classification setting, as it requires jointly learning the parameters of DT (acting as the router) so that it selects the appropriate leaf, along with the parameters of the corresponding leaf regressor.
Following prior work, DGT and ICCT, \cite{Panda2024} used \dt with a single STE during training to select both the leaf and its corresponding output. However, STE introduces approximation due to biased gradient updates to the internal node parameters, caused by a {\em mismatch between the forward and backward propagation objectives}.
This mismatch gives rise to two key issues that degrade performance:
(i) internal node parameters are updated based on the outputs of all regressors at each step, which destabilizes the tree structure and reduces the accuracy of routing samples to the correct leaves; and
(ii) only a single regressor is updated per step, so if a sample is later routed to a different leaf, the corresponding regressor (being untrained for that sample) incurs a high loss. The tree therefore tends to retain the previously used leaf, since its loss is comparatively lower and reverting requires fewer updates than training a new regressor. This, in turn, results in repeated assignment of samples to the same leaf and underutilization of many leaves.
Together, these issues substantially impair the performance of DTs trained with STE.

To address the above limitations in regression, this work introduces a method inspired by the Top-$k$ selection mechanism widely used in Mixture-of-Experts (MoE) models. In this mechanism, a routing network selects a subset of $k$ experts, and their outputs are combined to produce the final result.
Indeed, DT regression can be viewed similarly, with the DT acting as a Top-$1$ router (in the form of a classifier) that selects a single expert (leaf), which is then used to compute the final output.
However, training a DT (represented by \dt) with Top-$1$ selection does not provide informative gradients.
In the MoE setting, it is common to use a fixed $k \geq 2$ (often $k = 4$ or $k = 5$), which has been shown to outperform STE-based approaches \citep{muqeethsoft}. Motivated by this, our method starts with $k \geq 2$ when training the router \dt and gradually anneals it to $k = 1$, yielding the hard DT required for \dt-regression.
This annealing strategy provides more stable and targeted updates by adjusting the parameters of the internal nodes of \dt based only on the outputs of the selected regressors.
Using the formal definition of Top-$k$, we show two key differences in its gradient updates compared to STE. First, in Top-$k$, the top $k$ regressors are updated, whereas STE updates only a single regressor. Second, the parameters of the DT are updated using only the top $k$ selected regressors, in contrast to STE, which updates based on all regressors.
Because Top-$k$ trains multiple regressors for each sample during training, it enables neighboring regressors to adapt when the router DT shifts a sample from one leaf to another. By gradually annealing $k$ from $k \geq 2$ to $k = 1$, our approach ensures that at each step $k \in \{K, \dots, 1\}$ there is {\em no approximation}, since the forward and backward passes share the same semantics.
In particular, for any fixed $k$, the same function is used to compute the output in forward propagation (evaluation) and the gradients in backward propagation (optimization), thereby maintaining an approximation-free training process throughout.
This preserves a consistent training objective for Top-$k$ and thereby improves DT regression performance, unlike STE, which suffers from a mismatch between forward and backward propagation.
From another perspective, the Top-$k$ annealing process can be regarded as a smart initialization of step $k$, using the parameters from a previous step $k' > k$.

We experimented with \dt on various benchmarks for both classification and regression tasks, comparing its performance against different methods for producing hard DTs: DGT \citep{dgt2022} for gradient descent-based approaches, TAO \citep{carreira2018alternating, tao-regression} for non-greedy algorithms, CRO-DT \citep{ea2023} for evolutionary algorithms, and CART as the standard greedy algorithm. The classification and regression tasks primarily involve tabular datasets, except for MNIST, a small image dataset. The number of features ranges from 4 to 780 (with MNIST as an outlier), and the number of classes ranges from 2 to 26. For classification, \dt achieves the best performance on every dataset, demonstrating the effectiveness of our proposed unapproximated methodology.
For regression, \dt outperforms all methods on all datasets except two, for which it produces DTs almost as accurate as the best ones ($<3\%$ and $<1\%$ from the best). Additionally, the training time for gradient-based DTs is significantly shorter than that of competing methods.

Finally, we demonstrate the deployment of \dt as a programmatic DT policy in RL environments with both discrete and continuous action spaces, achieved by simply replacing the NN. This aligns with the growing interest in Differentiable Programmatic RL to represent policies as programs \citep{verma2018programmatically, qiu2021programmatic}. Gradient-based DT training is particularly well-suited for RL, where training without gradients is challenging due to the absence of fixed datasets.
A common workaround is to train an NN and then distill it into a DT using greedy strategies such as CART, which performs well only in simple RL tasks with low-dimensional, discrete action spaces \citep{bastani2018verifiable}. In contrast, gradient-based DTs can directly learn policies for RL tasks and match the efficiency of NNs \citep{dgt2022, icct2022, marton2025mitigating}. \dt can replace NNs in standard RL pipelines (e.g., Stable-Baselines3 \citep{stable-baselines3} or CleanRL \citep{huang2022cleanrl}). On discrete-action benchmarks (up to 10 actions, 32 input dimensions), \dt matches NN performance and significantly outperforms other DT-based methods trained using gradient descent or imitation learning. For continuous actions with limited input dimensions, we modify the Soft Actor-Critic (SAC) \citep{sac} training loop to use Top-$k$ methods for DT training. \dt Top-$k$ outperforms competitors in continuous control, though architecture choice is less critical in this setting.

\smallskip
\noindent 
\textbf{Main contributions} are as follows:
\begin{itemize}
 \item We introduce the \dt architecture, which we prove to be semantically equivalent to DTs in Theorem 1.
 
 \item Learning \dt for classification with standard gradient descent corresponds exactly to learning a DT, without resorting to any approximations, unlike competing methods. Experimentally, this results in the most efficient DTs for classification tasks in every standard benchmark we experimented with.
 
 \item We develop an annealed Top-$k$ approach for using \dt in regression tasks that avoids approximation and overcomes the biased gradients and structural instability caused by STE approximation. This procedure provides a stable and accurate differentiable \dt, achieving the best performance on almost all benchmarks and coming within $<3\%$ of the best on the remaining benchmarks.

 \item  We explain how to use \dt as a programmatic DT policy in the RL setting, again producing experimentally the most efficient DT policies, for both continuous action spaces using \dt-regression; and by a large margin for discrete actions using \dt-classification.

 \end{itemize}

This paper is an extended version of our conference publication \citep{Panda2024}, offering substantial advances in the regression setting through a novel, approximation-free annealed Top-$k$ approach, introduced in Section~\ref{dt-regression}. In this new section, we formally define Top-$k$ and systematically compare its gradient updates with those of the STE-based method, thereby addressing the inherent limitations of STE-based approximation that constrained our earlier work \citep{Panda2024}. 
Experimental results, presented in Section~\ref{results_regression} with twice as many benchmarks as in \citep{Panda2024},
demonstrate the clear superiority of Top-$k$ for \dt-based regression across all benchmark datasets. 
While \dt STE was less accurate than TAO in $60\%$ of the benchmarks (by $5\%$ on average, worst case by $30\%$), \dt Top-$k$ was more accurate than TAO in $80\%$ of the benchmarks (by $3\%$ on average, best case by $6\%$). These findings are further strengthened by an ablation study that analyzes which component of our annealed Top-$k$ process has the greatest impact on training stability and performance compared to using the STE approximation.
Finally, Section~\ref{results-rl} extends our contribution to the RL setting, where we show the effectiveness of Top-$k$ vs. STE in continuous-action environments. Our source code for the introduced Top-$k$ approach is publicly available on GitHub under the branch ``\texttt{topk}'', which succeeds the earlier version of our code: \href{https://github.com/CPS-research-group/dtsemnet/tree/topk}{\texttt{https://github.com/CPS-research-group/dtsemnet/tree/topk}}.

We organize the paper as follows. Section~\ref{related_work} reviews related work. Section~\ref{oblique_dt} introduces the \dt architecture and its application to regression, classification, and RL tasks. Section~\ref{sec:results} presents extensive empirical evaluations across these benchmarks. Finally, Section~\ref{sec:conslusion} concludes the paper with a summary of contributions and future research directions.


\section{Related Work}\label{related_work}

\noindent
\textbf{Non-Gradient-Based DT Training.} Historically, DT training techniques have not relied on gradient descent. CART \citep{breiman1984cart} (and its extensions) is a well-known method for training {\em axis-aligned} DTs by recursively splitting the dataset at each node based on selected features using metrics such as entropy or Gini impurity. For learning {\em oblique} DTs in this manner, methods such as Oblique Classifier 1 (OC1) \citep{murthy1993oc1} and GUIDE \citep{loh2014fifty} have been proposed, but they typically yield suboptimal performance because learning oblique DTs is considerably more challenging than learning axis-aligned ones. TAO is currently a state-of-the-art method for oblique DT learning, improving DTs obtained from CART (or random DTs of a given depth) by alternately fine-tuning (using gradients but not end-to-end) node parameters at specific depths \citep{carreira2018alternating, tao-regression}.

Concerning MIP formulations \citep{bennett1996optimal, bertsimas2017optimal} 
or Global EA-based search approaches, such as CRO-DT \citep{ea2023}, they learn DTs by searching over various structures of DTs but at high computational costs, which is impractical for large DTs and datasets. 
The proposed \dt overcomes these challenges by using gradient-descent to lower training time, which we confirm by comparing with CRO-DT \citep{ea2023}, which proposes matrix encoding of (oblique) DTs to speed up training compared to previous EA-based methods, and produces axis-aligned DTs.

\smallskip
\noindent
\textbf{Gradient-Based DT Training.}
Lately, several works have proposed approximating DTs as soft DTs to enable gradient descent for learning, where decision nodes typically use the \texttt{Sigmoid} function \citep{zantedeschi2021learning,hazimeh2020tree,yang2018deep,frosst2017distilling,tanno2019adaptive,biau2019neural,silva2020_softDT,ding2021cdt,qiu2021programmatic,policy_tree,neural_forestICCV,samuel,wan2020nbdt}. {\em Hardening} soft DTs, 
that is, transforming them into hard DTs by discretizing the probabilities induces severe inaccuracies \citep{icct2022}.
More closely related to our work, DGT \citep{dgt2022} represents (oblique) DTs as an NN-architecture using the (non-differentiable) sign activation function, resorting to {\em quantized} gradient descent to learn it, leveraging principles from training binarized NNs using STE \citep{hubara2016binarized}. 
Specifically, during forward propagation, nodes utilize a 0-1 step function, whereas, during backward propagation, nodes employ a piecewise linear function or some differentiable approximation \citep[see][]{dgt2022}. Similarly, ICCT \citep{icct2022} and GradTree \citep{marton2024gradtree} learns (axis-aligned) DTs using NNs with the \texttt{Sigmoid} activation function and STEs.
In all these works, the hard DTs that are produced are (slightly) different from the DT (soft DT or using STE), which is being optimized by gradient descent. In contrast, the \dt architecture using ReLU activation functions allows standard gradient descent to be performed, and the output DT from \dt-classification is exactly the same as the function optimized by gradient-descent, without approximation. Experiments confirm that it is more accurate in practice, significantly so for classification tasks.

\smallskip
\noindent
\textbf{Top-$k$ Routing.}
DT regression bears strong similarity to MoE architectures \citep{shazeer2017outrageously, muqeethsoft}, where each regressor at a leaf can be viewed as an expert and the DT acts as a routing network that selects which leaf regressor (expert) to use, with the regressor’s output serving as the expert output. While standard MoE models typically use a Top-$k$ selection mechanism with a fixed $k \geq 2$, DT regression requires a Top-1 selection to choose a single leaf regressor for a given input. However, Top-1 selection produces flat gradients, making it difficult to train the DT effectively. Although Switch Transformer \citep{fedus2021switch} also uses Top-1 expert routing, it scales the expert output by the routing probability, which is incompatible with DT regression where the regression output must come directly from the selected leaf regressor.
To address these challenges, we instead use an annealing strategy that starts with $k \geq 2$ to provide meaningful gradient signals for learning DT parameters, which is consistent with prior findings that Top-$k$ selection yields better gradient signals than STE-based approximations \citep{muqeethsoft}, and gradually reduces $k$ to $1$ to obtain a hard regression DT. Finally, while REINFORCE \citep{muqeethsoft} offers an alternative for discrete expert selection, its high variance and weak learning signals make it ineffective in supervised regression settings.

\smallskip
\noindent
\textbf{Training DTs as Programmatic Policies in RL.}
Hard or soft DT policies can be obtained via imitation learning \citep{imitationlearning}, i.e., learning from expert policies, usually pretrained NNs \citep{bastani2018verifiable,jhunjhunwala,liu2019toward,bewley2021tripletree,roth2019conservative,verma2018programmatically,verma2019imitation,coppens2019distilling}. For instance, VIPER \citep{bastani2018verifiable} imitates a Q-network (or policy network) by creating a dataset from collected samples and trains a DT using CART, with sample weightage assigned based on Q-values. In contrast, \dt directly learns a hard oblique DT in RL (using Proximal Policy Optimization (PPO) \citep{schulman2017proximal} or SAC). 
Other works, such as ProLoNet \citep{prolonets}, that learn {\em soft} DTs using the RL framework, with the objective of initializing weights from expert humans. In contrast, \dt learns a {\em hard} DT. 
ICCT \citep{icct2022} and GradTree \citep{marton2025mitigating} proposed an STE-based approach to learn axis-aligned DTs using gradient descent. By comparison, we can handle oblique trees, which are more expressive and accurate than axis-aligned DTs, especially for discrete actions.

\section{Differentiable Oblique Decision Trees}\label{oblique_dt}

\noindent
\textbf{Notations:} We use lowercase letters (e.g., $y$) for scalars, bold lowercase (e.g., $\vx$) for vectors, and bold uppercase (e.g., $\tA$) for matrices or higher-order tensors. Blackboard bold (e.g., $\sR$) is used for sets and calligraphic font (e.g., $\mathcal{L}$) denotes functions. A dataset is represented as $\sD = {(\vx_i, y_i)}_{i=1}^N$, where inputs $\vx_i \in \sR^d$ have $d$ features and labels $y_i \in \sR$. The loss function $\mathcal{L}(\hat{y}, y)$ measures the discrepancy between prediction $\hat{y}$ and ground truth $y$. The dot product of vectors $\vu, \vv \in \mathbb{R}^d$ is denoted by $\langle \vu, \vv \rangle$.
\subsection{Oblique Decision Trees}
An \emph{oblique decision tree} \(\mathcal{T}\) comprises $m$ internal nodes $\sI = \{I_0, \dots, I_{m-1}\}$ and $n$ leaf nodes $\sL = \{L_0, \dots, L_{n-1}\}$. Each internal node performs a binary test (true or false) based on an oblique hyperplane, defined as a linear combination of input features, as illustrated in Fig. \ref{fig:tree}.
Given an input-output pair $(\vx, y) \sim \sD$, the routing function $\mathcal{T}(\vx; \tA, \bm{b}): \sR^d \to \sL$ deterministically maps $\vx$ to a leaf through a sequence of binary tests. Specifically, at node $I_j \in \sI$, the test $\langle \tA_j, \vx \rangle + \bm{b_j} > 0$ determines whether to proceed to the right child (true) or the left child (false), ignoring ties\footnote{We assume inputs almost surely do not lie exactly on decision boundaries.}, where $\tA_j \in \sR^d$ and $\bm{b_j} \in \sR$ are parameters at that internal node.
Once a leaf $L_\ell = \mathcal{T}(\vx)$ is reached, a prediction is made based on the assigned class label or model (e.g., regressor) associated with that leaf, parametrized by $\bm{\Theta}$, which collectively denotes parameters of all leaves. The training objective is to optimize $(\tA, \vb, \bm{\Theta})$ by minimizing the expected loss:
$
\texttt{min}_{\tA, \vb, \bm{\Theta}} \; \mathbb{E}_{(\vx, y) \sim \sD} \left[ \mathcal{L}(\hat{y}, y) \right].
$

\begin{figure}[t]
  \centering

  \begin{subfigure}[b]{0.45\textwidth}
    \centering
    \begin{tikzpicture}
  [ internal/.style={fill=blue!10}, 
    leaf/.style={circle, fill=red!10, minimum size=8mm, inner sep=0pt} ]

  \node[internal] (A) {${\scriptstyle I_0: \langle \tA_0, \vx \rangle + \mathbf{b}_0 > 0?}$};
  \path (A) ++(-140:22mm) node[internal] (B) {${\scriptstyle I_1: \langle \tA_1, \vx \rangle + \mathbf{b}_1 > 0?}$};
  \path (A) ++(-40:22mm) node[internal] (C) {${\scriptstyle I_2: \langle \tA_2, \vx \rangle + \mathbf{b}_2 > 0?}$};
  \path (B) ++(-130:16mm) node[internal] (D) {${\scriptstyle I_3: \langle \tA_3, \vx \rangle + \mathbf{b}_3 > 0?}$};

  \path (D) ++(-130:16mm) node[leaf] (F) {\scriptsize $L_0$};
  \path (D) ++(-50:16mm) node[leaf] (G) {\scriptsize $L_1$};
    \path (B) ++(-50:18mm) node[leaf] (E) {\scriptsize $L_2$};
  \path (C) ++(-120:18mm) node[leaf] (H) {\scriptsize $L_3$};
  \path (C) ++(-60:18mm) node[leaf] (I) {\scriptsize $L_4$};

  \draw[red!30, thick] (A) -- (B)
  node[left, pos=0.36, text=black] {\scriptsize no};

\draw[red!30, thick] (A) -- (C)
  node[right, pos=0.36, text=black] {\scriptsize yes};

\draw[red!30, thick] (B) -- (D)
  node[left, pos=0.35, text=black] {\scriptsize no};

\draw[red!30, thick] (B) -- (E)
  node[right, pos=0.35, text=black] {\scriptsize yes};

\draw[red!30, thick] (D) -- (F)
  node[left, pos=0.35, text=black] {\scriptsize no};

\draw[red!30, thick] (D) -- (G)
  node[right, pos=0.35, text=black] {\scriptsize yes};

\draw[red!30, thick] (C) -- (H)
  node[left, pos=0.35, text=black] {\scriptsize no};

\draw[red!30, thick] (C) -- (I)
  node[right, pos=0.35, text=black] {\scriptsize yes};
\end{tikzpicture}

    \caption{Decision Tree $\mathcal{T}$}
    \label{fig:tree}
  \end{subfigure}%
  \hfill
  \begin{subfigure}[b]{0.55\textwidth}
    \centering
    \begin{tikzpicture}
  [shorten >=1pt,->,draw=black!50, node distance=\layersep]
    \tikzstyle{every pin edge}=[<-,shorten <=1pt]
    \tikzstyle{neuron}=[circle,fill=black!25,minimum size=10pt,inner sep=0pt]
    \tikzstyle{input neuron}=[neuron, fill=none, minimum size=15pt,]
    \tikzstyle{output neuron}=[neuron, fill=red!10, minimum size=10pt]
    \tikzstyle{node neuron}=[circle,fill=blue!15,minimum size=10pt,inner sep=0pt]
    \tikzstyle{hidden neuron}=[circle,fill=gray!10,minimum size=10pt,inner sep=0pt]
    \tikzstyle{annot} = [text width=1em, text centered]
    
    \def\layersep{2.1cm}

    \foreach \name / \y in {1,...,3}
        \node[input neuron] (I-\name) at (0, -0 cm - 1.5*\y cm) {$\scriptsize x_\y$};


    \foreach \name / \y in {0,...,3}
        \path[yshift=-0.5cm]
            node[node neuron] (HH-\name) at (\layersep,-1cm -1 * \y cm) {$I'_\y$};

    \foreach \name / \y in {0,...,3}
        \path[yshift=0.5cm]
            node[hidden neuron] (H-\name) at (2*\layersep,-1.2cm -1.5*\y cm) {$\scriptsize \top_\y$};

      \foreach \name / \y in {0,...,3}
          \path[yshift=0.5cm]
                node[hidden neuron] (B-\name) at (2*\layersep,-1.9 cm -1.5*\y cm) {$\scriptsize \bot_\y$};


    \foreach \name / \y in {0,...,4}
    \path[yshift=-1.5cm]
    node[output neuron] (O-\name) at (3*\layersep,0.5 cm - 1*\y cm) {$\scriptsize L'_\y$};

    \foreach \dest in {0,...,3}
            \draw[blue!30, thick](I-1) edge (HH-\dest);
    \foreach \dest in {0,...,3}
            \draw[blue!30, thick](I-2) edge (HH-\dest);
    \foreach \dest in {0,...,3}
            \draw[blue!30, thick](I-3) edge (HH-\dest);

        \foreach \source in {0,...,3}
      \draw[red!30, thick]
        (HH-\source)
        edge node[
          draw=none,
          fill=none,
          font=\scriptsize,
          text=black,   
          midway,
          above
        ] {1}
        (H-\source);
    
    \foreach \source in {0,...,3}
      \draw[red!30, thick]
        (HH-\source)
        edge node[
          draw=none,
          fill=none,
          font=\scriptsize,
          text=black,   
          midway,
          below
        ] {-1}
        (B-\source);
       
       \draw[red!30, thick](B-0) edge (O-0);
       \draw[red!30, thick](B-1) edge  (O-0);
       \draw[red!30, thick](B-2) edge (O-0);
       \draw[red!30, thick](H-2) edge (O-0);
       \draw[red!30, thick](B-3) edge  (O-0);

       \draw[red!30, thick](B-0) edge (O-1);
       \draw[red!30, thick](B-1) edge (O-1);
       \draw[red!30, thick](B-2) edge  (O-1);
       \draw[red!30, thick](H-2) edge  (O-1);
       \draw[red!30, thick](H-3) edge  (O-1);

       \draw[red!30, thick](B-0) edge  (O-2);
       \draw[red!30, thick](H-1) edge (O-2);
       \draw[red!30, thick](B-2) edge (O-2);
       \draw[red!30, thick](H-2) edge (O-2);
       \draw[red!30, thick](B-3) edge  (O-2);
       \draw[red!30, thick](H-3) edge  (O-2);

       \draw[red!30, thick](H-0) edge (O-3);
       \draw[red!30, thick](H-1) edge  (O-3);
       \draw[red!30, thick](B-2) edge  (O-3);
       \draw[red!30, thick](B-3) edge  (O-3);
       \draw[red!30, thick](H-3) edge  (O-3);

       \draw[red!30, thick](H-0) edge  (O-4);
       \draw[red!30, thick](B-1) edge  (O-4);
       \draw[red!30, thick](H-1) edge  (O-4);
       \draw[red!30, thick](H-2) edge  (O-4);
       \draw[red!30, thick](B-3) edge (O-4);
       \draw[red!30, thick](H-3) edge  (O-4);

       \draw[red!30, thick](B-1) edge node[draw=none,fill=white,midway,above, text=black] {$1$} (O-3);
       \draw(I-2) edge node[draw=none,fill=white,midway,above] {$\footnotesize \tA, \mathbf{b}$} (HH-2);
        \node[annot,above of=H-0, node distance=0.5cm] (hl) {\scriptsize \texttt{ReLU}};

\end{tikzpicture}
    \caption{\dt $\mathcal{N_T}$}
    \label{fig:dtsemnet}
  \end{subfigure}

  \caption{(a) DT with 4 internal nodes ($I_0$–$I_3$) and 5 leaves ($L_0$–$L_4$) and (b) \dt $\mathcal{N_T}$ corresponding to DT $\mathcal{T}$ in (a) with $\vx \in \sR^3$.}
  \label{fig:combined}
\end{figure}

\subsection{\dt Encoding}
Finding suitable values for $\tA$ and $\bm{b}$ to make the DT $\mathcal{T}$ accurate is challenging. We explain in the following how to find such values using standard gradient descent algorithms.
For that, we encode the DT $\mathcal{T}$ as a NN architecture \dt $\mathcal{N_T} : \sR^d \to \sR^n$, which is semantically equivalent with DT $\mathcal{T}$, and which can be converted back to an equivalent 
DT after learning. It is important to note that the \dt $\mathcal{N_T}$ outputs a vector of dimension $n$ equal to the number of leaves in DT $\mathcal{T}$. We now describe the \dt architecture, illustrated in Fig. \ref{fig:dtsemnet}:

\smallskip
\noindent
\textbf{Input Layer.}  
$d$ nodes, one for each dimension of the input $\vx \in \R^d$.

\smallskip
\noindent
\textbf{Second Layer (Linear Layer).}  
$m$ linear nodes $(I'_0,\ldots,I'_{m-1})$, each corresponding to an internal node $(I_0,\ldots,I_{m-1})$ of the DT $\mathcal{T}$ and computing $I'_i(\vx) = \langle \tA_i, \vx \rangle + b_i$.

\smallskip
\noindent
\textbf{Third Layer (Branching Activation).}  
$2m$ nodes, two per internal node:
\[
\top_i(\vx) := \mathtt{ReLU}(I'_i(\vx)), \qquad   
\bot_i(\vx) := \mathtt{ReLU}(-I'_i(\vx)).
\]
Exactly one of $\top_i(\vx)$ or $\bot_i(\vx)$ is positive (except on measure-zero boundaries) and $\top_i(\vx)+\bot_i(\vx)=|I'_i(\vx)|$.

\smallskip
\noindent
\textbf{Last Layer (Logits Output).}  
A linear layer with fixed (non-trainable) weights and $n$ nodes $(L'_0,\ldots,L'_{n-1})$, each corresponding to a leaf $(L_0,\ldots,L_{n-1})$ of the DT $\mathcal{T}$, is defined as
$$
L'_j(\vx) = \sum_{i=0}^{m-1} \Big(\delta^{\top}_{i,j}\,\top_i(\vx) + \delta^{\bot}_{i,j}\,\bot_i(\vx)\Big), 
\qquad \delta^{\top}_{i,j}, \delta^{\bot}_{i,j} \in \{0,1\}.
$$
The coefficients encode the tree structure: $\delta^{\top}_{i,j} = 0$ if $L_j$ is a left descendant of $I_i$ and $1$ otherwise; $\delta^{\bot}_{i,j} = 0$ if $L_j$ is a right descendant of $I_i$ and $1$ otherwise; and if $L_j$ is not a descendant of $I_i$, then $\delta^{\top}_{i,j} = \delta^{\bot}_{i,j} = 1$.

\begin{theorem}\label{th1}
Given a DT $\mathcal{T}$ and its equivalent NN representation \dt $\mathcal{N}_\mathcal{T}$ (used as a classifier), the output node (from $L'_is$) with the maximum value in $\mathcal{N}_\mathcal{T}$ corresponds to the same leaf selected by $\mathcal{T}$ for any input $\vx$. That is,
$
\forall\, \vx \in \mathbb{R}^d, \quad \mathtt{argmax}\, \mathcal{N_T}(\vx) = \mathcal{T}(\vx).
$
\end{theorem}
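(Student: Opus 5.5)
Fix $\vx$ with $I'_i(\vx)\neq 0$ for every $i$; the boundary cases are excluded by the footnote's convention on ties. Let $L_\ell=\mathcal{T}(\vx)$ be the leaf reached by the tree. The plan is to write every logit $L'_j(\vx)$ relative to the common quantity
\[
S(\vx):=\sum_{i=0}^{m-1}\big(\top_i(\vx)+\bot_i(\vx)\big)=\sum_{i=0}^{m-1}|I'_i(\vx)|.
\]
By the definition of the coefficients, $\delta^{\top}_{i,j}$ and $\delta^{\bot}_{i,j}$ are never both $0$. Exactly one of them is $0$ precisely when $I_i$ is an ancestor of $L_j$, and the zero sits on the side opposite to the branch taken toward $L_j$. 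Hence
\[
L'_j(\vx)=S(\vx)-\sum_{i\in \mathrm{Anc}(j)} c_{i,j}(\vx),
\]
where $c_{i,j}(\vx)=\bot_i(\vx)$ if $L_j$ lies in the right subtree of $I_i$ and $c_{i,j}(\vx)=\top_i(\vx)$ if it lies in the left subtree. In other words, $c_{i,j}(\vx)$ is the activation of the branch of $I_i$ that does \emph{not} lead to $L_j$.

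\textbf{Step 1: the selected leaf attains the maximum.} For $j=\ell$, every ancestor $I_i$ of $L_\ell$ is routed toward $L_\ell$. If $\vx$ goes right at $I_i$, then $I'_i(\vx)>0$, so $\bot_i(\vx)=0$. If $\vx$ goes left, then $I'_i(\vx)<0$, so $\top_i(\vx)=0$. In either case $c_{i,\ell}(\vx)=0$, and therefore $L'_\ell(\vx)=S(\vx)$. Since every $c_{i,j}\ge 0$, we also get $L'_j(\vx)\le S(\vx)$ for all $j$, so $L'_\ell$ is a maximizer.

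\textbf{Step 2: the maximum is unique.} Take $j\neq\ell$ and let $I_k$ be the lowest common ancestor of $L_j$ and $L_\ell$. Such a node exists because both are leaves of the same binary tree, and they lie in opposite subtrees of $I_k$. The path to $L_\ell$ follows the sign of $I'_k(\vx)$, so the branch leading to $L_j$ is the one whose activation is zero. Consequently the branch $L_j$ does \emph{not} take at $I_k$ is the active one, giving $c_{k,j}(\vx)=|I'_k(\vx)|>0$. It follows that $L'_j(\vx)\le S(\vx)-|I'_k(\vx)|<L'_\ell(\vx)$, so $\mathtt{argmax}\,\mathcal{N_T}(\vx)=\ell$.

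The main obstacle is bookkeeping: checking that the $\delta$ convention really yields the formula ``$S$ minus the off-path activations at the ancestors''. For non-ancestors both coefficients are $1$, so those nodes contribute fully to every logit and cancel in the comparison. The lowest-common-ancestor step is the only place where strictness enters, and it uses the no-ties assumption $I'_k(\vx)\neq 0$.
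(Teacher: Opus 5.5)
Your proof is correct and follows the same route as the paper: you show that $L'_\ell(\vx)$ equals the full sum $\sum_i(\top_i(\vx)+\bot_i(\vx))$, and that every other leaf omits the strictly positive activation at the node where its path diverges from that of $L_\ell$ (your lowest common ancestor is the paper's $i^\star$). Your bookkeeping $L'_j(\vx)=S(\vx)-\sum_{i\in\mathrm{Anc}(j)}c_{i,j}(\vx)$ with $c_{i,j}\ge 0$ is in fact more careful than the paper's: the paper asserts that all contributions other than at $i^\star$ coincide with those of $L'_\ell$, which fails in general at nodes below the divergence point on $L_j$'s path, whereas your inequality $L'_j(\vx)\le S(\vx)-|I'_k(\vx)|$ is exactly what is needed.
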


\begin{proof}
Fix $\vx \in \R^d$ and let $L_\ell = \mathcal{T}(\vx)$ be the leaf selected by the DT. Note that $L_\ell$ has all its associated decisions satisfied, whereas for any other leaf $L_j \neq L_\ell$, at least one associated decision is violated.
Recall the construction of the \dt $\mathcal{N_T}$, for each internal node $I'_i$, the branching activations are $\top_i(\vx)=\mathtt{ReLU}(I'_i(\vx))$ and $\bot_i(\vx)=\mathtt{ReLU}(-I'_i(\vx))$,
where exactly one of $\top_i(\vx)$ or $\bot_i(\vx)$ is strictly positive (ties occur only on a measure-zero set), and each leaf output is given by
$L'_j(\vx)=\sum_{i=0}^{m-1}\!\big(\delta^{\top}_{i,j}\,\top_i(\vx)+\delta^{\bot}_{i,j}\,\bot_i(\vx)\big)$.

For the selected leaf $L_\ell$, the coefficients $(\delta^\top_{i,\ell}, \delta^\bot_{i,\ell})$ ensure that the strictly positive branching activation is included for each path node, while both activations are included for off-path nodes. Hence,
$$
L'_\ell(\vx) = \sum_{i=0}^{m-1} \big(\top_i(\vx) + \bot_i(\vx)\big).
$$

Now consider any other leaf $L_j \neq L_\ell$. Since its path differs from that of $L_\ell$, there exists an internal node $I'_{i^\star}$ where the decision required by $L_j$ differs from the actual decision at that node. If $I'_{i^\star}(\vx) > 0$, the actual decision is right while $L_j$ requires left, so $\delta^{\top}_{i^\star,j}=0$ and the strictly positive term $\top_{i^\star}(\vx)$ is omitted. If $I'_{i^\star}(\vx) < 0$, the actual decision is left while $L_j$ requires right, so $\delta^{\bot}_{i^\star,j}=0$ and the strictly positive term $\bot_{i^\star}(\vx)$ is omitted. For all $i \neq i^\star$, the contributions coincide with those in $L'_\ell(\vx)$. Therefore,
$$
L'_j(\vx)=L'_\ell(\vx)-\begin{cases}
\top_{i^\star}(\vx), & I'_{i^\star}(\vx)>0,\\[2pt]
\bot_{i^\star}(\vx), & I'_{i^\star}(\vx)<0,
\end{cases}
$$
and hence $L'_j(\vx)<L'_\ell(\vx)$.
Thus $L'_\ell(\vx)$ is the unique maximum among $\{L'_0(\vx),\ldots,L'_{n-1}(\vx)\}$, i.e.,
$\texttt{argmax} \, \mathcal{N_T}(\vx)=L_\ell=\mathcal{T}(\vx)$.
\end{proof}

Informally, for a given input $\vx$, the selected leaf $\ell = \mathcal{T}(\vx)$ in the DT corresponds to the index of the output node with the maximum value in the output vector $\vz = \mathcal{N_T}(\vx)$, analogous to selecting a class label from the output of a NN. The \dt is differentiable due to its use of \texttt{ReLU} and linear activations. In the following sections, we describe how \dt can be used in classification and regression tasks without relying on approximations, and how it can be applied in RL setting as programmatic policies.

\subsection{\dt Classification:}\label{dt-classification}
In the \emph{classification} setting, each leaf $\ell \in \sL$ is assigned a class label $\Theta_\ell \in \sC$, where $\sC = \{C_1, \ldots, C_c\}$ denotes the set of $c$ possible classes, and $\bm{\Theta} = \{\Theta_\ell\}_{\ell=0}^{n-1}$ represents the class assignments for all leaves. The prediction is given by $\hat{y} = \Theta_{\mathcal{T}(\vx)}$.

We now describe how the \dt architecture implements this classification procedure. In \dt, the tree is encoded as a NN in which each leaf $L_\ell \in \sL$ corresponds to an output neuron $L'_\ell$. Since multiple leaves may share the same class label, we introduce an additional output layer with $c$ nodes $\{C_1, \ldots, C_c\}$, one for each class. Each output node $C_i$ receives input from all leaf nodes $\ell$ such that $\Theta_\ell = C_i$. The connection weight from $L_\ell$ to $C_i$ is set to $1$ if $\Theta_\ell = C_i$, and $0$ otherwise. A \texttt{MaxPool} operation is applied at each $C_i$ to select the maximum activation among its inputs. By Theorem~\ref{th1}, if $\ell = \mathcal{T}(\vx)$ for input $\vx$, then $L'_\ell$ of \dt has the highest output. Consequently, the class node $C_i = \Theta_\ell$ also receives the highest input, yielding the same prediction as $\mathcal{T}$ and preserving semantic equivalence.

\subsection{\dt Regression:} \label{dt-regression}

Traditional regression trees, such as CART~\citep{breiman1984cart} and DGT~\citep{dgt2022}, assign to each leaf $\ell$ a scalar parameter $\alpha_\ell \in \mathbb{R}$. In CART, $\alpha_\ell$ typically represents the average of training labels reaching leaf $\ell = \mathcal{T}(\vx)$, which limits generalization, even for simple linear functions.
In contrast, more expressive regression trees, such as those used in TAO-linear~\citep{tao-regression} and ICCT~\citep{icct2022}, associate a linear regressor with each leaf $\ell$, parameterized by a weight vector $\bm{\theta}_\ell \in \mathbb{R}^d$ and a bias term $\alpha_\ell \in \mathbb{R}$. The collection of linear regressor parameters across all leaves of the DT is represented by the tensor $\bm{\Theta} = [(\bm{\theta}_0, \alpha_0), \dots, (\bm{\theta}_{n-1}, \alpha_{n-1})]^\top \in \mathbb{R}^{n \times (d+1)}$, and the prediction for input $\vx$ is given by $\hat{y} = \langle \bm{\theta}_{\mathcal{T}(\vx)}, \vx \rangle + \alpha_{\mathcal{T}(\vx)}$, corresponding to a leaf-specific linear transformation over the input features. DTs with linear regressors actually correspond to a piecewise linear regression, where the different linear pieces are described by the parameters at the leaves, and the switch between pieces by the decisions of the DT.
We choose the more expressive model with linear regressors at the leaves for regression tasks with \dt.

We now study different methods to learn \dt for regression tasks, in order to support aforementioned expressive formulation with linear regressors at each leaf.
Mainly, the problem is how to jointly learn (a) the regression parameters $\bm{\Theta}$ at the leaves of the DT, {\em and} (b) the decision parameters $(\tA,\bm{b})$ at the internal nodes of the DT that route inputs to the appropriate leaf.

\subsubsection{Hardmax Selector with One-Hot Vector}
The \dt $\mathcal{N_T}$ (corresponding to DT $\mathcal{T}$) outputs logits $\vz = \mathcal{N_T}(\mathbf{x}) \in \mathbb{R}^n$ over the $n$ leaves. 
We define a selector function $\mathcal{S} : \sR^n \to \sR^n$ that transforms the logits into a selection vector.
The standard approach defines a \textit{hardmax} selector function $\mathcal{S_H}$ that maps the logit vector $\mathbf{z}$ to a one-hot vector $\vh = \mathcal{S_H}(\mathbf{z})$, where $\vh_i = 1$ iff $i = \texttt{argmax}_j z_j$, and $\vh_i = 0$ otherwise.

To encode the linear regressors at the leaves of the DT, we define function $\mathcal{R}(\vx) : \mathbb{R}^d \to \mathbb{R}^n$ that outputs $\mathcal{R}(\vx) = [\langle \bm{\theta}_0, \vx \rangle + \alpha_0, \dots, \langle \bm{\theta}_{n-1}, \vx \rangle + \alpha_{n-1}]^\top$. The final prediction is then $\hat{y} = \langle \vh, \mathcal{R}(\vx) \rangle$,which corresponds to selecting leaf (one-hot vector) and applying its associated linear regressor (regressor value corresponding to the one-hot index) and is equivalent to $\langle \bm{\theta}_{\mathcal{T}(\vx)}, \vx \rangle + \alpha_{\mathcal{T}(\vx)}$. That is, in the forward propagation phase, the one-hot vector based on the \textit{hardmax} selector incurs {\em no approximation}. 
We now turn to the gradient derivation used in the backward propagation phase.

\subsubsection{Gradient Derivation with a One-Hot Vector}

We summarize the previously-described forward propagation with one-hot vector $\vh$, producing scalar $\hat{y}$ (the regression output), as:
$$
\vx \xrightarrow{\mathcal{N_T}} \bm{z} \xrightarrow{\mathcal{S_H}} \vh \xrightarrow{ \langle \vh, \vr \rangle} \vh^\top \cdot \mathcal{R}(\vx) = \hat{y}
$$

Given a loss $\mathcal{L}(\hat{y}, y)$, and 
denoting  $\vr = \mathcal{R}(\vx)$,
the gradients with respect to the parameters (weights) \( \tA, \bm{b}, \bm{\Theta} \) is computed using the chain rule:
$$
\nabla_{\tA} \mathcal{L} = \frac{\partial \mathcal{L}}{\partial \hat{y}} \cdot \frac{\partial \hat{y}}{\partial \vh} \cdot \frac{\partial \vh}{\partial \bm{z}} \cdot \frac{\partial \bm{z}}{\partial \tA}, \quad
\nabla_{\bm{b}} \mathcal{L} = \frac{\partial \mathcal{L}}{\partial \hat{y}} \cdot \frac{\partial \hat{y}}{\partial \vh} \cdot \frac{\partial \vh}{\partial \bm{z}} \cdot \frac{\partial \bm{z}}{\partial \bm{b}}, \quad
\nabla_{\bm{\Theta}} \mathcal{L} = \frac{\partial \mathcal{L}}{\partial \hat{y}} \cdot \frac{\partial \hat{y}}{\partial \vr} \cdot \frac{\partial \vr}{\partial \bm{\Theta}}
$$

The gradient updates to the regressor parameters $\bm{\Theta}$ are given by $\nabla_{\boldsymbol{\Theta}} \mathcal{L} = \mathcal{L}'(\hat{y}, y) \cdot \vh \cdot \vx^\top$. Note that, because of the one-hot vector $\vh$, only the parameters $\bm\theta_i$ (and $\alpha_i$) corresponding to the selected leaf receive gradient updates: only the leaf regressor selected by $\vh$ is updated.

The gradient update for the (routing in the tree) parameters $\tA$ follows the chain rule:
$\nabla_{\tA} \mathcal{L} = \mathcal{L}'(\hat{y}, y) \cdot \vr^\top \cdot \frac{\partial \vh}{\partial \bm{z}} \cdot \frac{\partial \bm{z}}{\partial \tA}$ (and similarly for $\bm{b}$).
Since \dt is differentiable, $\frac{\partial \vz}{\partial \tA}$ exists. However, when using hardmax selector $\mathcal{S_H}$, the term $\frac{\partial \vh}{\partial \vz}$ is zero almost everywhere, making the gradient non-informative. 

To address this issue, the standard technique is to rely on
STE \citep{bengio2013estimating,hubara2016binarized} methods to enable gradient flow, e.g. in ICCT \citep{icct2022}, DGT \citep{dgt2022} and GradTree \citep{marton2024gradtree}, as well as in the earlier conference version of our paper \citep{Panda2024}. In the following, we elaborate on the STE mechanism.

\subsubsection{Straight-Through Estimator (STE)}

STE \citep{bengio2013estimating} enables gradient-based optimization through the hardmax selector function $\vh = \mathcal{S_H}(\vz)$ which is non-differentiable. It does so by keeping the original function unchanged during the forward pass, while approximating its Jacobian during the backward pass as $\frac{\partial \vh}{\partial \vz} \approx \mathbf{I}$ (identity function). This approximation allows gradients to flow through the discrete selector, resulting in the approximate gradient with respect to the internal tree parameters $\tA$ (similarly for $\bm{b}$) as $\nabla_{\tA} \mathcal{L} \approx \mathcal{L}'(\hat{y}, y) \cdot \vr^\top \cdot \frac{\partial \vz}{\partial \tA}$. 
Note that the gradient updates are based on the outputs of \textit{all} regressors $\vr$, and will flow through \textit{all} the leaves and then back to \textit{all} the internal decision parameters in the tree (changing all these values), including those very far from impacting the actual (accurate) forward pass. While gradient flow back from leaves close to (e.g., one switch in decision away) the selected leaf may make sense to enable 
switching of decisions, flowing through all of them seems inappropriate.
In any case, STE introduces an objective mismatch due to the forward-backward propagation discrepancy, driving the model toward suboptimal local minima and causing training instability \citep{yinunderstanding}, which in turn hurts the accuracy of learning the parameters of the DT regressor.

The brighter side is that a single STE call (and thus approximation) 
is necessary when used in \dt (such as in the earlier conference version of this paper \citep{Panda2024}), at the output layer. For comparison, methods such as DGT and ICCT \citep{dgt2022, icct2022} rely on STE at every internal node for both classification and regression, introducing further approximations that impact learning.

\subsubsection{Top-$k$ Selection}
Conceptually, the DT part of \dt for regression tasks aims at selecting 
the leaf/regressor, analogous to expert selection in the MoE framework, where accurate routing is crucial \citep{shazeer2017outrageously, muqeethsoft}.
In the MoE setting, STE can be used, but it frequently fails to learn an effective routing or classification network \citep{muqeethsoft}, which impairs the correct routing of input samples to their corresponding experts, ultimately resulting in suboptimal performance. In the MoE context, Top-$k$ selectors , $k\geq 2$, are usually preferred \citep{shazeer2017outrageously}.

For $k \geq 1$, the Top-$k$ selector function $\mathcal{S}_k$ combines a masking operation, which retains only the $k$ largest logits 
$\vz^{(k)}$ of $\vz$ (setting the rest to $-\infty$), with a temperature-scaled \texttt{Softmax} that normalizes the masked output to sum to $1$, giving rise to Top-$k$ selector $\mathcal{S}_k$. 
Formally, we define $\mathcal{S}_k(\vz) := \sigma_\tau(\vz^{(k)})$, where \texttt{Softmax} $\sigma_\tau(\vz)_i = \exp(\vz_i / \tau) / \sum_{j=1}^n \exp(\vz_j / \tau)$, $\tau > 0$ is the temperature parameter that controls the \texttt{Softmax} sharpness. Note that the \texttt{Softmax} output for $-\infty$ is zero, so $\sigma_\tau(\vz^{(k)})$ behaves as if only the Top-$k$ entries are input to the function.

\begin{figure}[htbp]
    \centering
    \begin{subfigure}[b]{0.49\textwidth}
        \centering
        \includegraphics[trim=1.4cm 1.1cm 0.6cm 0cm, clip, width=1\textwidth]{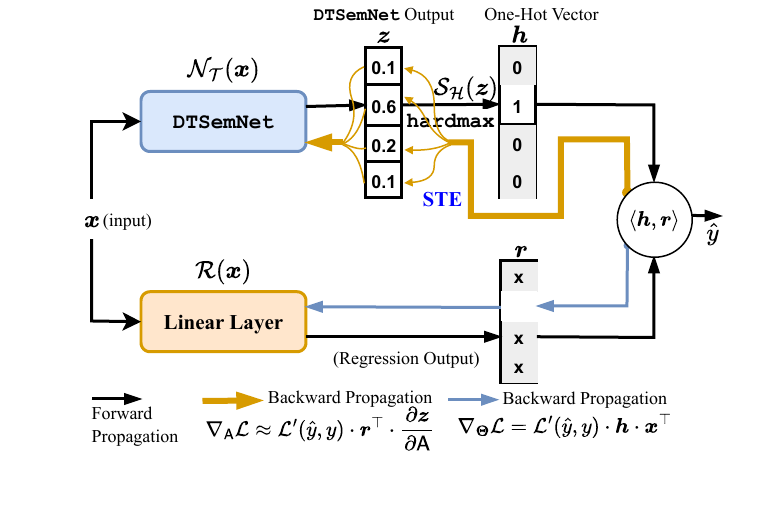}
        \caption{\dt Regression with STE}
        \label{fig:dtregnet-ste}
    \end{subfigure}
    \hfill
    \begin{subfigure}[b]{0.49\textwidth}
        \centering
        \includegraphics[trim=1.4cm 0.9cm 0.6cm 0cm, clip, width=1\textwidth]{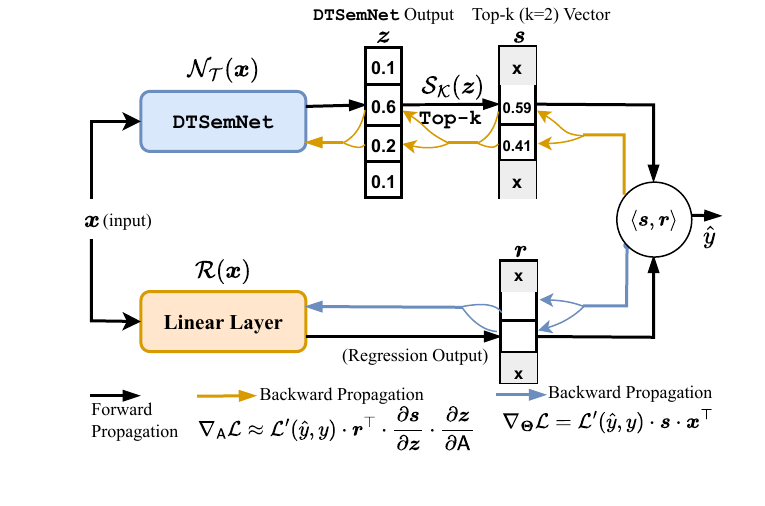}
        \caption{\dt Regression with Top-$k$}
        \label{fig:dtregnet-topk}
    \end{subfigure}
    \caption{Architecture for extending \dt to regression tasks. In (a), the thick line (orange) for backpropagation indicates that \dt is updated using all regressor outputs $\vr$, whereas in (b), only the selected leaf(s) is used for the update.}
    \label{fig:dtregnet}
\end{figure}

When used in \dt for regression tasks, we replace the one-hot vector $\vh$
by $\mathcal{S}_k(\vz)=\vs$, the final output is thus the weighted sum over the $k$ leaf-specific regressors: $\hat{y} = \vs^\top \mathcal{R}(\vx)$.
For $k=1$, the forward propagation yields the exact expected hard DT as a regressor, as $\vs=\vh$ the one hot-vector. 
For $k\geq 2$, the forward propagation is a weighted sum between the results of several ($k$) regressors at the leaves. 
The differentiability of the temperature-scaled \texttt{Softmax} in $\mathcal{S}_k$ allows gradients to flow through $\vs$, while the Top-$k$ masking operation acts as a fixed selection mechanism that preserves the computational graph thereby allowing informative gradients with $\frac{\partial \vh}{\partial \bm{z}}$. That is, there is no mismatch, and thus no approximation, between the forward propagation of Top-$k$ with the backward propagation of Top-$k$.

We compare in Fig. \ref{fig:dtregnet} the backpropagation of gradients with Top-$2$ against that with STE. There are two main differences between Top-$k$ ($k\geq 2$) and STE. First, in STE, only a single regressor is updated (the one selected by the one-hot vector). In Top-$2$, two regressors will be updated. This may be beneficial, e.g. in the case that the current routing is inaccurate, and some internal choices send the input to the wrong regressor (which will often be the case at the beginning of the learning process): then the correct regressor will also get updated with the target value, and a switch of internal choices will be more likely to happen in the future as the other regressor will be more accurate on these inputs.
Second, in STE, the outputs of {\em all} leaf regressors influence the gradient updates of {\em all} the internal node parameters of the \dt, which is extreme as nodes very far from the current decision will get updated.
By comparison, Top-$k$ routing will update the decision nodes in the path from the root to one of the Top-$k$ leaves. This will leave the decision nodes far from the actual decision for input $\vx$ unchanged by the gradient, while changing a few nearby ones to make a later switch of decision more likely (if it improves the gradient).
Additionally, the gradient of the regressor parameters $\bm{\Theta}$ is influenced by a weighted combination of the Top-$k$ leaf predictions. This soft combination distributes gradient signals across multiple leaf nodes, enabling more stable training and accurate gradient flow to both the \dt and regressors, in contrast to STE-based methods, which propagate inaccurate gradients to internal nodes of \dt.

\subsubsection{Annealing from Top-$k$ to Top-$1$}

\begin{algorithm}[t!]
    \centering
    \caption{\dt Regression with Annealed Top-$k$ Selector}
    \label{alg:dtsement-topk}
    \begin{algorithmic}[1]
    \State \textbf{Input:} Initial parameters $(\tA^0, \bm{b}^0, \bm{\Theta}^0)$, learning rate $\eta$, schedule $\mathcal{K}(t)$ specifying $k$ for Top-$k$ at each epoch $t$, dataset $(\vx, y) \sim \sD$, and total epochs $T$
    \For{$t = 0, 1, \dots, T$}
        \If{$k \geq 2$} \Comment{\textbf{Joint training}  of DT and regressors}
            \State Selector output $\vs = \mathcal{S}_{\mathcal{K}(t)}(\mathcal{N_T}(\vx))$ 
            \State Regressor output $\hat{y} = \langle \vs, \vr \rangle$ and compute loss $\mathcal{L}(\hat{y}, y)$
            \State Update parameters: $(\tA^t, \bm{b}^t, \bm{\Theta}^t) \leftarrow (\tA^{t-1}, \bm{b}^{t-1}, \bm{\Theta}^{t-1}) - \eta \nabla \mathcal{L}$
        \Else \Comment{Top-$1$: \textbf{Fine-Tuning} of  regressors}
            \State \algorithmicif\ $t < T_f$ \algorithmicthen\ Fine-tune $\bm{\Theta}$ with \textit{augmented samples} routed to the leaf
            \State \algorithmicelse\ Fine-tune $\bm{\Theta}$ with \textit{actual samples} routed to the leaf

        \EndIf       
    \EndFor
    \State \textbf{Return:} Final parameters $(\tA^T, \bm{b}^T, \bm{\Theta}^T)$
    \end{algorithmic} 
\end{algorithm}

We are interested to learn a Top-$1$ selector, as $\vs$ reduces to the one-hot vector $\vh$, and only one regressor $\bm{\theta}_\ell$ is 
called per input, the one selected by the DT, which is the expected semantics of \dt for regression tasks. 
However, we cannot learn Top-$1$ directly because the gradient is constant and non-informative during backpropagation. Instead, we propose Algorithm~\ref{alg:dtsement-topk}, which has two phases of training. It begins training \dt with Top-$k$ (first phase), $k \geq 2$ (e.g., $k=4$), softly routing each input to multiple leaves (lines 3–6). This promotes broader exploration and richer gradient signals. During training, we gradually anneal $k$ to a smaller value according to a schedule $\mathcal{K}(t)$, where $t$ is the epoch number. This sharpens the selection and induces sparser routing. The temperature parameter $\tau$ in \texttt{Softmax} further controls the sharpness of the output, where lower values make the selector more discrete, while higher values promote uniform distributions. At the end of the first training phase, we begin the second phase with $k = 1$ (lines 7–9), where only the leaf regressors are further updated.
The internal node parameters are frozen because gradients no longer propagate through \dt. This is fine as the internal node parameters have been trained before using Top-$k$, in the steps where $k\geq 2$.
For fine-tuning the regressor, we use a two-stage process. In the first stage, we train with {\em augmented samples} to combat {\em overfitting} and provide a strong initialization for the next stage. Specifically, when the training epoch is less than $T_f$, we route augmented samples to each leaf, that is, samples for which the leaf is either the first or second choice. This augmentation includes boundary samples near transition regions, thereby capturing training points not originally routed to the leaf but likely close to testing samples that will be. Training on this augmented dataset reduces overfitting and prepares the regressor with a better initialization. In the second (and final) stage, we fine-tune using only the {\em actual} samples routed to each leaf, thereby fitting the regressor more precisely to the given dataset.

This two-phase (for $k\geq2$ and $k=1$) training strategy balances exploratory learning ($k \geq 2$) during structure formation with precise fitting (leaf fine-tuning) of regressors in the final ($k=1$) phase. The Top-$k'$ step can be seen as a precise initialization of Top-$k$ step with $k < k'$.
At each step, {\em no approximation} is used, as the {\em evaluation function} used to compute the output in the forward pass matches semantically with the {\em optimization function} used to compute the gradients in the backward pass. This contrasts with STE, where the optimization function is a smoothened approximation of the evaluation function, creating a semantic mismatch.

\subsection{\dt for RL:} \label{dt-rl}
To tackle RL tasks, we use standard Deep RL frameworks (e.g., StableBaselines3 \citep{stable-baselines3} and CleanRL \citep{huang2022cleanrl}), simply replacing the NN with the \dt architecture as a programmatic policy and running gradient descent within the RL framework. For environments with discrete action spaces, we use PPO \citep{schulman2017proximal} with a \dt-classification model, assigning one class per action. For environments with continuous action spaces, we use SAC \citep{sac} with a \dt-regression model, assigning one linear regressor per action dimension (e.g., two for a continuous lunar lander: one for horizontal and one for vertical engines).
Compared with the earlier conference version \citep{Panda2024}, we compare STE vs Top-$k$ for RL with continuous action space.
To this end, we modify the SAC \citep{sac} training loop as follows. During sample collection the \dt model always uses Top-$1$, which serves as the output policy. At each update step the collected samples are used to train the model in three phases. First, training is performed with Top-$k$ selection for a fixed number of gradient steps. This is followed by training with augmented samples, and finally by Top-$1$ selection for half as many gradient steps, consistent with the procedure outlined for \dt-regression.


\subsection{Comparison with Other Encodings of DTs into NNs}

DGT \citep{dgt2022} encodes oblique decision trees using the \texttt{Sign} activation at each layer. Because \texttt{Sign} is non-differentiable (unlike the \texttt{ReLU} activation used in \dt), DGT resorts to a {\em quantized} gradient descent strategy, with an STE approximation applied at every node. In regression tasks, DGT assigns a scalar to each leaf, in contrast to the linear regressors used in \dt-regression.
The ICCT architecture \citep{icct2022} targets axis-aligned DTs for RL tasks. Each leaf is associated with the product of edge weights along its path, computed in logarithmic space to avoid explicit multiplications. A \texttt{Sigmoid} activation is applied at each decision node, yielding a soft DT approximation. During RL training, this soft DT is repeatedly “crispified” into a hard (axis-aligned) DT. To backpropagate through the resulting non-differentiable Heaviside step function, ICCT again relies on STE approximation, which \dt-classification avoids.
For \dt-regression, our earlier work \citep{Panda2024} used a single STE at the output layer, which still introduced approximation. In this work, we propose a Top-$k$ approach that eliminates this reliance, thereby offering a comprehensive framework that avoids approximation in both classification and regression settings.


\section{Experimental Evaluation}\label{sec:results}
In this section, we evaluate the performance of \dt, comparing it against competing methods for learning hard DTs. We begin with supervised learning setups, using a collection of benchmark datasets for both multi-class classification and regression. On these datasets, we compare test accuracy against state-of-the-art (SOTA) methods: TAO \citep{carreira2018alternating, tao-regression}, a non-greedy approach for learning oblique DTs; DGT \citep{dgt2022}, a gradient descent-based method also for oblique DTs; CRO-DT \citep{ea2023}, a global search-based method for axis-aligned DTs; and CART, a standard greedy algorithm for axis-aligned DTs.

We first discuss the performance of the \dt Top-$k$ approach on regression datasets, which constitutes a major contribution of this paper beyond our earlier work on \dt \citep{dtsemnet-supp}. For regression, we also analyze the effect of using Top-$k$ selection versus the STE on the learning dynamics using a synthetic dataset.

We then present results on classification datasets, along with training time comparisons. Note that training time is reported only for benchmarks where such data is available. TAO could not be evaluated in our setting due to the lack of publicly available implementation.
To further understand the learning behavior of \dt compared to DGT, we also analyze their loss landscapes following the approach of \citet{li2018visualizing}, along with a discussion on the generalization gap based on the difference between training and test accuracies.

Finally, we extend the evaluation to RL environments, considering both discrete and continuous action spaces. We compare \dt with DGT (both learning oblique DT policies via gradient descent), ICCT (learning axis-aligned DT policies via gradient descent), and VIPER, which learns axis-aligned DT policies via imitation learning from a NN policy trained with Deep RL. The performance of the underlying NN policy is also reported as a baseline.

We implemented \dt and conducted all experiments using Python and PyTorch. Our testing platform has 8 CPU cores (AMD 75F3, Zen 3 architecture), 128 GB of RAM, and a 2 GB GPU (NVIDIA Quadro P620). The supplementary material \citep{dtsemnet-supp} provides additional results and details regarding datasets, train-test splits, hyperparameters, etc. 
Our source code is available on GitHub under the new branch ``\texttt{topk}'':
\href{https://github.com/CPS-research-group/dtsemnet/tree/topk}{\texttt{https://github.com/CPS-research-group/dtsemnet/tree/topk}}.

\subsection{Regression Tasks}\label{results_regression}
We begin by evaluating the Root Mean Squared Error (RMSE) performance of the proposed regression methods on standard tabular benchmarks. Following this, we conduct an ablation study to investigate the effect of using Top-$k$ routing in \dt on learning dynamics in comparison to STE by focusing on parameter updates and overall performance. For this, we design a controlled synthetic regression task to isolate and examine how STE introduces estimation errors during the optimization of decision parameters, which in turn impairs routing accuracy and ultimately degrades regression performance.

\subsubsection{Performance on Regression Benchmark} 
We first evaluate regression performance (RMSE) on tabular benchmarks by considering five regression datasets from \citep{tao-regression}, along with ten additional datasets from \citep{grinsztajn2022tree, dgt2022}, for which TAO results and code are not publicly available (we requested access; no code was available at the time of submission). While TAO-linear \citep{tao-regression} generates DT with regressors at the leaves, similarly as \dt, allowing them to generalize, DGT \citep{dgt2022} only learns scalars at the leaves, similarly as CART, and thus is less efficient (and cannot generalize), needing deeper DTs to get acceptable accuracy. For a fair comparison, we implemented  DGT-Linear, a modification of DGT using regressors at the leaves, in the same way as in \dt and in ICCT \citep{icct2022}.
Results for the original DGT with scalars at the leaves can be found in \citep{Panda2024}, and they are not better than DGT-Linear in any case, as expected. Following \citep{tao-regression}, in Tables~\ref{tab:regression1} and~\ref{tab:regression2} we fix the height to the same (small) value when evaluating all architectures with regressors at the leaves, namely \dt Top-$k$, \dt STE, DGT-Linear, and TAO-Linear. CART is allowed to produce DTs as deep as needed, since it generates non-oblique, axis-aligned trees with scalars at the leaves, and therefore requires deep trees to achieve reasonable performance. For \dt Top-$k$, we empirically set the annealing schedule to ($4 \rightarrow 1$) with temperature $\tau = 0.5$ across all datasets.

\begin{table*}[t!]
\caption{
Average RMSE results (lower is better) on regression tasks, reported as mean $\pm$ standard deviation over 10 runs. The number of features $N_f$ and the number of training samples $N_s$ are provided for each dataset. DGT-linear is a modified version of DGT \citep{dgt2022} that we implemented to incorporate regressors at the leaves.
The datasets are taken from \citep{tao-regression}. For \dt Top-$k$, \dt STE, DGT-linear, and TAO-linear, the tree height is fixed following \citep{tao-regression}, whereas CART does not use a fixed height. CART results are taken from \citep{dgt2022}, and TAO-linear results from \citep{tao-regression}. The
\textit{\textbf{Avg $\%$ distance to best}} is computed by dividing each model RMSE by the best RMSE achieved on that dataset, then averaging these normalized values across all datasets (lower distance is better).
}
    \centering
    \resizebox{\textwidth}{!}{%
    \begin{tabular}{l c | c c c c c |c}
    \toprule
    Dataset & $N_f$, $N_s$ & \shortstack{DT \\ Height} & \shortstack{\dt \\ Top-$k$} & \shortstack{\dt \\ STE} & DGT-Linear & TAO-Linear & CART \\
    \midrule
    Abalone &10, 2004& 5 & $2.13 \pm 0.01$ & $2.14 \pm 0.03$ & $2.14 \pm 0.03$ & $\mathbf{2.07 \pm 0.01}$ & $2.29 \pm 0.034$ \\
    Comp-Active &21, 3932& 5 & $\mathbf{2.48 \pm 0.03}$ & $2.65 \pm 0.18$ & $2.65 \pm 0.15$ & $2.58 \pm 0.02$ & $3.35 \pm 0.221$ \\
    Ailerons &40, 5723& 5 & $\mathbf{1.65 \pm 0.01}$ & $1.66 \pm 0.01$ & $1.67 \pm 0.017$ & $1.74 \pm 0.01$ & $2.01 \pm 0.00$ \\
    YearPred &90, 370972& 6 & $\mathbf{8.99 \pm 0.01}$ & $\mathbf{8.99 \pm 0.01}$ & $9.02 \pm 0.025$ & $9.08 \pm 0.03$ & $9.69 \pm 0.00$ \\
    CTSlice &384, 34240& 5 & $\mathbf{1.09 \pm 0.04}$  & $1.45 \pm 0.12$ & $1.78 \pm 0.25$ & $1.16 \pm 0.02$ & $5.78 \pm 0.224$ \\
    
    \midrule
    \textit{\textbf{Avg $\%$ distance}} & \textit{\textbf{to best}} $\downarrow$ & & $\mathbf{0.6\%}$ & $8.8\%$ & $15\%$ & $3.4\%$ & $101\%$ \\
    
    \bottomrule
\end{tabular}

    }
    \label{tab:regression1}
\end{table*}

\begin{table*}[b!]
\caption{
Average RMSE results (lower is better) on regression tasks, reported as mean $\pm$ standard deviation over 10 runs. The number of features $N_f$ and the number of training samples $N_s$ are provided for each dataset. The datasets are from \citep{grinsztajn2022tree, dgt2022}, for which TAO-linear results are unavailable. For \dt Top-$k$, \dt STE, and DGT-Linear, the tree height is fixed using the best-performing height from $\{3,4,5,6\}$, whereas CART does not use a fixed height.
}
    \centering
    \resizebox{\textwidth}{!}{%
    \begin{tabular}{ll|cccc|c}
    \toprule
    Dataset & $N_f$, $N_s$ & Height & \shortstack{\dt \\ Top-$k$} & \shortstack{\dt \\ STE} & DGT-Linear & CART \\
    \midrule

    Medical &5, 114145& 5 &$\mathbf{0.84 \pm 0.01}$ & $0.85 \pm 0.02$ & $0.85 \pm 0.07$ & $1.46 \pm 1.40$ \\
    Sulfur &6, 7056& 5 & $\mathbf{0.34 \pm 0.01}$ & $0.35 \pm 0.01$  & $0.35 \pm 0.01$  & $0.43 \pm 0.03$ \\
    Bike Sharing &6, 12165& 6 & $\mathbf{1.15 \pm 0.01}$ & $1.18 \pm 0.01$ & $1.22 \pm 0.02$ & $1.22 \pm 0.04$ \\
    Houses & 8, 14448 & 5 & $\mathbf{0.28 \pm 0.00}$ & $\mathbf{0.28 \pm 0.00}$ & $\mathbf{0.28 \pm 0.01}$ & $0.45 \pm 0.01$ \\
    Wine Quality &11, 4547& 5 & $\mathbf{0.69 \pm 0.005}$ & $0.70 \pm 0.004$ & $0.70 \pm 0.005$  & $0.78 \pm 0.01$ \\
   
    Elevator &16, 11619& 4 & $\mathbf{0.20 \pm 0.00}$ & $0.21 \pm 0.00$ & $0.21 \pm 0.00$  & $0.54 \pm 0.01$ \\

    Pol &26, 10500& 5 & $\mathbf{6.61 \pm 0.34}$ & $7.62 \pm 1.38$ & $8.03 \pm 0.95$ & $11.02 \pm 1.05$ \\
    Superconduct &79, 14884& 5 & $\mathbf{1.36 \pm 0.02}$ & $1.40 \pm 0.01$ & $1.47 \pm 0.03$ & $1.51 \pm 0.01$ \\

    Microsoft &136, 578729& 5 & $\mathbf{0.77 \pm 0.00}$ & $\mathbf{0.77 \pm 0.00}$ & $\mathbf{0.77 \pm 0.00}$ & $\mathbf{0.77 \pm 0.00}$ \\

    PDBBind &2052, 9013 & 2 & $1.34 \pm 0.01$ & $\mathbf{1.33 \pm 0.01}$ & $1.34 \pm 0.01$  & $1.55 \pm 0.00$ \\

    \midrule

\textit{\textbf{Avg $\%$ distance}} & \textit{\textbf{to best}} $\downarrow$ & & $\mathbf{0.1\%}$ &  $3.1\%$ & $4.7\%$ & $50\%$ \\

    \bottomrule
\end{tabular}

    }
    \label{tab:regression2}
\end{table*}


\dt Top-$k$ demonstrates the best performance on almost all datasets compared to all other models, except in two cases: 
On the PDB-Bind dataset (Table \ref{tab:regression2}), Top-$k$ is very slightly outperformed ($<1\%$) by STE in PDBBind, within the standard deviation ($\pm 0.01$). On this test, the very limited number of leaves (four) does not discriminate much between STE and Top-$k$.
For Abalone (Table \ref{tab:regression1}), 
\dt Top-$k$ ranks second, $<3\%$ away from 
TAO-Linear. Apart from these, \dt Top-$k$ achieves the best performance on all other datasets and is statistically significantly better than all competitors across datasets, as confirmed by the Wilcoxon signed-rank test at the 0.05 significance level. 
It outperforms the state of the art TAO-Linear on 4 out of 5 benchmarks, significantly so for 3 of them: Comp-Active ($4\%$), Ailerons ($5\%$), and CTSlice ($>6\%$). On average, it produces DTs $3\%$ more accurate than TAO-Linear for regression tasks. This is a significant improvement compared to \dt STE, which was underperforming TAO-Linear in 3 out of 5 cases, including $25\%$ less accurate on CTSlice. 
Overall, \dt Top-$k$ shows an average RMSE advantage over STE of approximately $4\%$ across all datasets, with a notable $33\%$ improvement on CTSlice and $15\%$ on Pol.
The main reason is that STE underutilizes available leaves, concentrating data on fewer leaves, as illustrated in Fig.~\ref{fig:leaf-dist}. We will experimentally explain the reason why Top-$k$ learns more balanced leaves utilization than STE in Section \ref{sec:ablation}.
Concerning DGT-Linear, 
\dt Top-$k$ triples the lead (at $8\%$) that \dt STE already have over DGT-Linear, 
with a substantial $60\%$ improvement on the CTSlice dataset. 
The historical method from CART produces particularly inaccurate DTs ($>50\%$ less accurate on average), in particular 5 times less accurate for CTSlice.
Overall, \dt Top-$k$ ranks very high across all datasets, highlighting the benefits of avoiding approximations in differentiable trees.

\begin{figure}[]
    \centering
    \begin{subfigure}[b]{0.48\textwidth}
        \centering
        \includegraphics[width=1\textwidth]{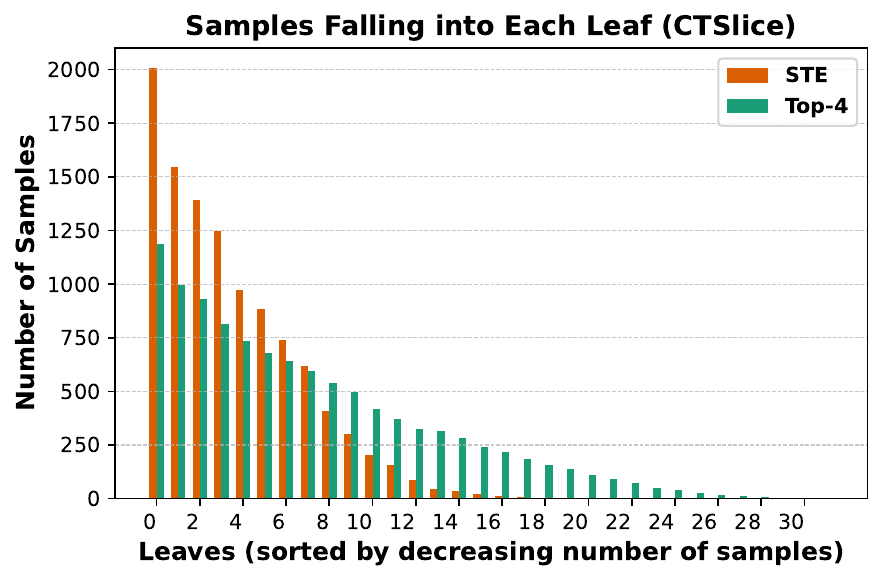}
        \label{fig:ld_ctslice}
    \end{subfigure}
    \hfill
    \centering
    \begin{subfigure}[b]{0.48\textwidth}
        \centering
        \includegraphics[width=1\textwidth]{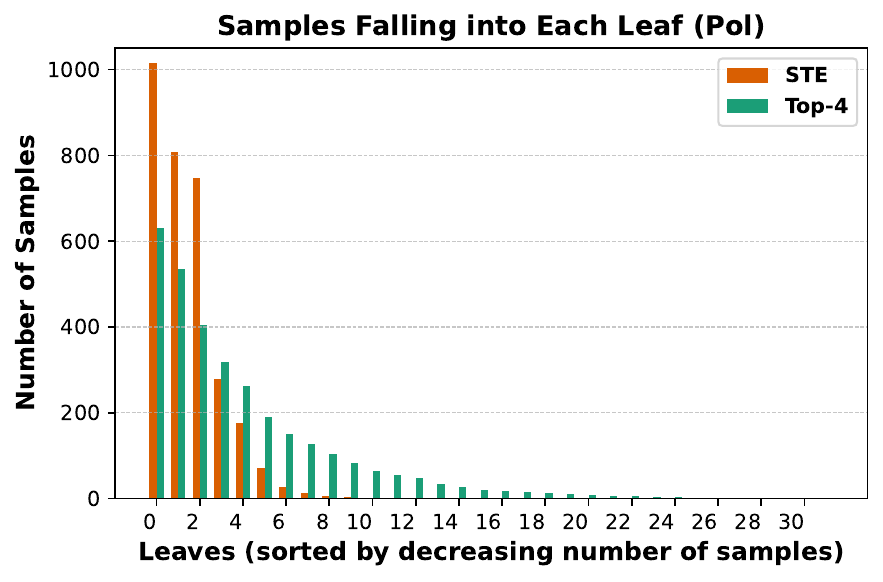}
        \label{fig:ld_pol}
    \end{subfigure}
    
    \caption{Sample distribution across leaves for the CTSlice and Pol datasets, showing that Top-$k$ ($k=4$) achieves more balanced leaf utilization. Results are averaged over 10 seeds.}
    \label{fig:leaf-dist}
\end{figure}


\subsubsection{Ablation Studies to Understand Training with Top-$k$ vs. STE}
\label{sec:ablation}

We now perform an ablation study to understand which difference(s) between STE and Top-$k$ training (updating via the gradient of {\em one} regressor in STE vs. $k$ regressors in Top-$k$; updating via the gradient of all paths from all leaves/regressors to the DT root node in STE vs. paths from $k$ leaves/regressors to the DT root node in Top-$k$) is responsible for producing significantly more accurate DTs.
We perform two distinct ablation studies:

\begin{figure}[b!]
    \centering
    \begin{subfigure}[b]{0.45\textwidth}
        \centering
        \includegraphics[width=\textwidth]{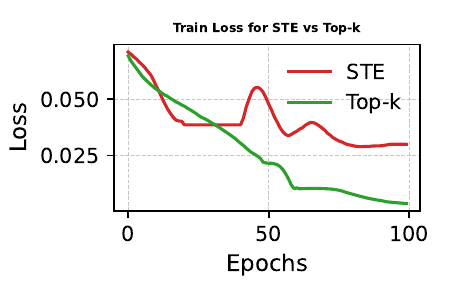}
        \caption{Train Loss}
        \label{fig:topk_vs_ste_loss}
    \end{subfigure}
    \hfill
    \begin{subfigure}[b]{0.5\textwidth}
        \centering
        \includegraphics[width=\textwidth]{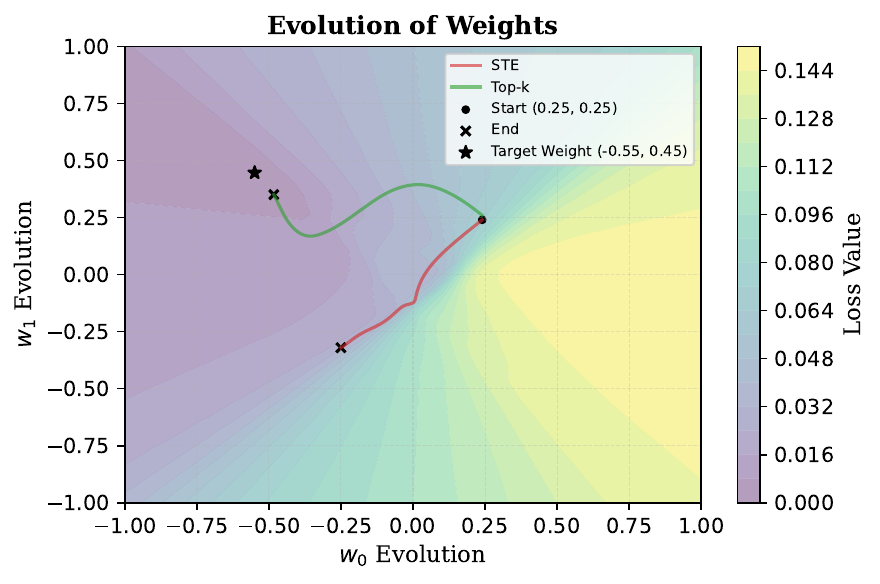}
        \caption{Evolution of weights $\vw$}
        \label{fig:topk_vs_ste_evolution}
    \end{subfigure}
    \caption{Comparison of training dynamics between \dt with Top-$k$ routing ($k=2$) and STE in a \dt-regression model of height 5, using a synthetic regression dataset generated by a teacher DT. The student network is trained to imitate the teacher.}
    \label{fig:topk-vs-ste}
\end{figure}

(a) To understand the difference between updating, during backpropagation, the internal node parameters based on all leaves/regressors (in STE) vs. only the top $k$ leaves/regressors, we design a synthetic test where only the root is updated (while other nodes and regressors are frozen). This allows us to work in low parametric dimensions (2), which can be represented graphically; and

(b) To understand the impact of updating one vs. top $k$ regressors, we cannot symmetrically freeze the DT, because training $k$ regressors is only meaningful in the early phase of learning (to enable switching in decisions), when routing in the DT is uncertain. Instead, we consider a benchmark that highlights differences in final leaves utilization between STE and Top-$k$ (see Fig.~\ref{fig:leaf-dist}). We then study how this balance evolves over time from a similar initialization, comparing STE, Top-$k$, and a hybrid Top-$k$/Reg-$1$ method, which behaves similarly to Top-$k$ except that it updates only the top-$1$ regressor during backpropagation, as in STE.

\medskip

\noindent
\textbf{(a) Training the Weight of the DT Root Node with Fixed Regressors.}

The random DT serves as the teacher model, providing the training dataset (pairs of inputs and regression values to reach) that the student models will try to imitate. The student models (STE and Top-$k$) use a \dt with the same architecture as the teacher DT (height 5, input $\vx \in \mathbb{R}^2$), but with some weights missing that need to be trained, while the others are fixed to the teacher’s weights and frozen. To ensure fairness, we initialize both STE and Top-$k$ with the same \dt\ (i.e., the missing weights are initialized in the same way). The dataset generated from the teacher comprises 30K samples, obtained by querying the teacher model with inputs uniformly sampled from the domain $[-1, 1]^2$.

All weights from the teacher network are copied to the student network, except for the weights at the root node of the DT, denoted by $\vw$. Specifically, the teacher uses $\vw = (-0.55, 0.45)$, while the student is initialized with $\vw = (0.25, 0.25)$. The objective is for the student to learn the target $\vw$ of the teacher, with all other parameters fixed and a learning rate of $0.01$.
Recall that the one-hot vector representing the semantics of \dt, equivalent to Top-$k$ with $k=1$, has no gradient backpropagated to internal nodes. To perform gradient descent on internal nodes, one must either rely on STE to create a gradient or on Top-$k$ with $k \geq 2$. We compare learning with STE and Top-$k$ ($k=2$) here, while evaluation is performed using the standard one-hot/Top-$1$ semantics of \dt.
As shown in Fig. \ref{fig:topk-vs-ste}, training with the STE converges to a suboptimal local minimum. In contrast, Top-$k$ routing leads to more stable and consistent progress toward the target weights.
This confirms the claim that the mismatch between the forward and backward objectives in STE leads to inaccurate gradient updates to the internal weights of the DT, ultimately hampering performance. 
On the other hand, there is no such problem when using Top-2 for learning the decisions but interpreting them in a Top-1 fashion.

\medskip
\noindent
\textbf{(b) Evolution of Leaves Balance with STE, Top-$k$, and Top-$k$/Reg-$1$.}

We analyze the sample distribution across leaves, i.e., the number of samples routed to each leaf, using regression benchmark datasets. For two datasets where there is a significant performance gap between STE and Top-$k$ ($\geq 10\%$), the distributions are shown in Fig.~\ref{fig:leaf-dist}. We observe that Top-$k$ ($k=4$) achieves a more balanced utilization of regressors, following a long-tailed distribution, whereas STE underutilizes many leaves.

\begin{figure}[t!]
    \centering
    \includegraphics[width=1\textwidth]{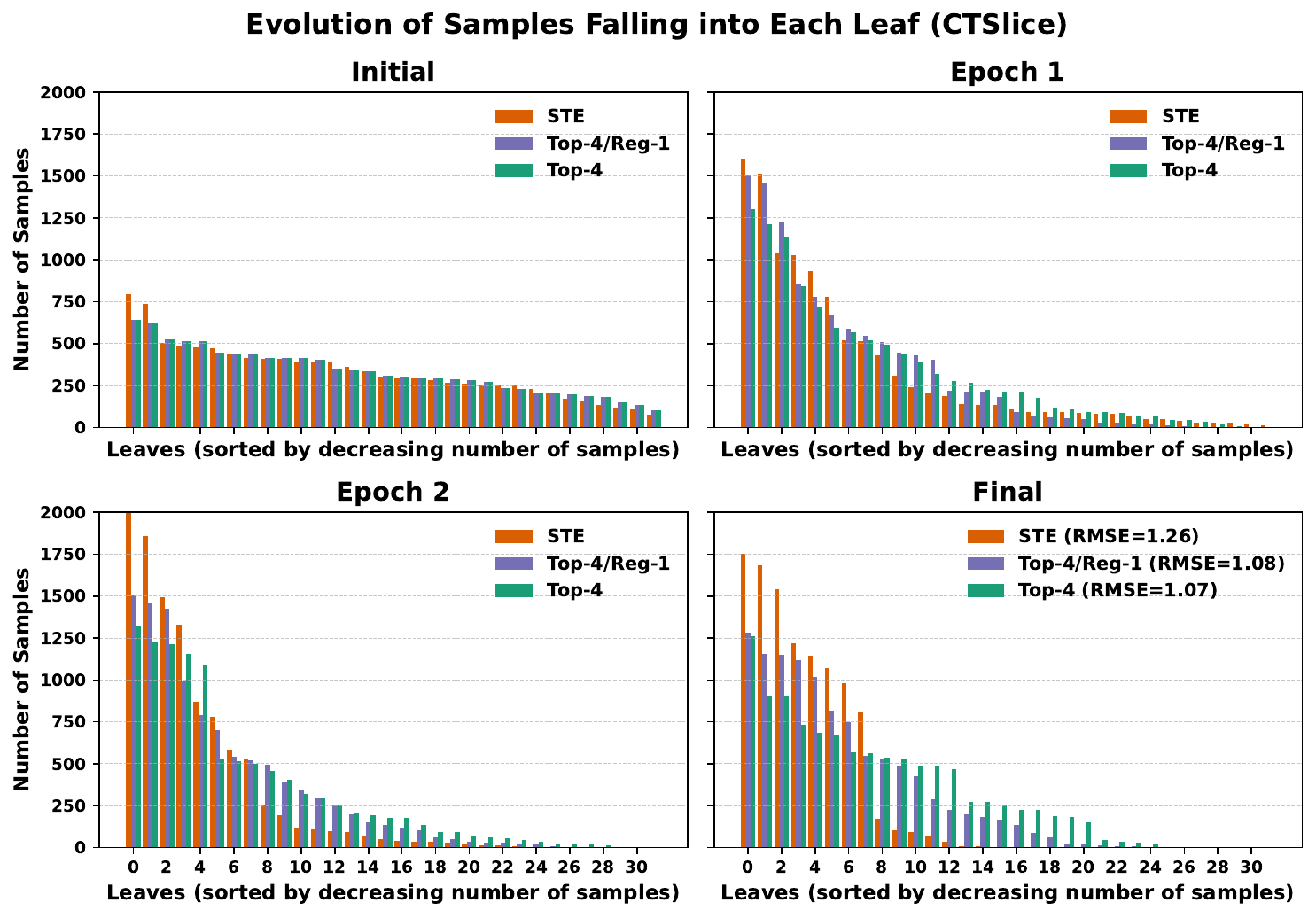}
    \caption{Sample distribution across leaves / regressors for CTSlice at the beginning of training, at epochs 1 and 2, and after training for a single seed. Initially, the sample distribution across leaves is similar for all models. Progressively, \dt with STE shows underutilization of leaves/regressors, whereas Top-$k$ ($k=4$) achieves more balanced utilization, and Top-$k$/Reg-1 being in-between.}
    \label{fig:leaf-dist-evolution}
\end{figure}

To further investigate this phenomenon (underutilization of leaves in STE versus balanced utilization in Top-$k$) we examine 
in Fig.~\ref{fig:leaf-dist-evolution} the evolution of sample distributions across leaves over training epochs on the CTSlice dataset, where the performance gap is particularly pronounced ($\geq 30\%$). 

At the beginning of training, the leaf distributions of all three models (STE, Top-4/Reg-1, and Top-4) are similar. As training progresses, Top-$k$ is observed to use leaves in a more balanced manner.

Our hypothesis is that Top-$k$, by updating multiple regressors during training, adapts more effectively to changes in decision boundaries, resulting in more balanced regressor utilization. To test this hypothesis, we modify Top-$k$ so that only one regressor is updated per sample (as in STE), which we call Top-4/Reg-1. As shown in Fig.~\ref{fig:leaf-dist-evolution}, 
STE and Top-4/Reg-1 utilize fewer regressors than Top-4 from Epoch 1 onwards.

After the first epoch, Top-4/Reg-1 uses far fewer leaves than Top-4: Top-4 utilizes 22 regressors with more than 100 samples, whereas Top-4/Reg-1 utilizes 16 regressors. This confirms that updating a single regressor has a significant impact on the balance of leaf distribution.

Interestingly, at the end of learning, Top-4 utilizes 20 regressors with more than 180 samples, compared with 15 regressors for Top-4/Reg-1 and only 8 regressors for STE, an inversion compared with Epoch 1 between Top-4/Reg-1 and STE. On top of the hypothesis we made and confirmed (that updating several regressors helps balance the regressors), there is an even stronger factor, acting over the long term, that explains why STE has a very narrow use of regressors. Because STE updates the internal nodes in the DT based on all the regressors (this is the only difference with Top-4/Reg-1, and we already proved in (a) that it was detrimental to learning a good DT), it prefers to use fewer regressors ($\approx 8$) compared to Top-4/Reg-1 to alleviate this factor.

In summary, updating DT parameters with all regressors, as in STE, has the most harmful effect, which is further compounded when updates are restricted to a single regressor. Top-$k$ avoids both issues, resulting in better performance.



\subsection{Classification Tasks} \label{results-classification}
For classification tasks, we evaluate the performance of \dt by comparing its test accuracy against other DT training approaches. We report results for both concise and dense trees, noting that concise DTs are generally more challenging to train effectively (since they have fewer choices of oblique hyperplanes to partition the input space). Prior work on CRO-DT \citep{ea2023} reports results only for relatively concise trees (height $\leq 4$). For deeper trees, CRO-DT exhibits a substantial performance drop, and since we were unable to configure the publicly available implementation to handle such depths reliably, we omit its results.
We then report the training time of the different DT learning methods to assess training efficiency.
For ablation studies, we analyze \dt and DGT to study the impact of the STE approximation on learning. Then, we discuss the performance of \dt compared to soft DTs.


\begin{table*}[b!]
    \centering
    \caption{Percentage accuracy (higher is better) on {\bf classification tasks} with the most accurate height $\leq 4$, as tested in \cite{ea2023}. For each dataset, we provide the number of features $N_f$, classes $N_c$ and training samples $N_s$. Averaged accuracy $\pm$ std is reported over 100 runs. The best-performing height is reported in parentheses for each architecture. We run the \dt and DGT experiments, while the results for TAO, CART and CRO-DT are copied from \citep{ea2023}.
    }
    \resizebox{\textwidth}{!}{%
    \begin{tabular}{llccccc}
    \toprule
    Dataset & $N_f$, $N_c$, $N_s$ & \dt & DGT & TAO & CART & CRO-DT \\
    \midrule 
    Balance Scale & 4, 3,625 & $\mathbf{90.2 \pm 2.2}$ (2) & $88.6 \pm 1.7$ (4) & $77.4 \pm 3.1$ (4) & $74.9 \pm 3.6$ (4) & $77.8 \pm 3.0$ (4) \\
    Banknote Auth & 4, 4, 1372 & $\mathbf{99.8 \pm 0.4}$ (3) & $ \mathbf{99.8 \pm 0.4}$ (3) & $96.6 \pm 1.3$ (4) & $93.6 \pm 2.2$ (4) & $95.2 \pm 2.2$ (4)\\
    Blood Transfusion & 4, 2,784 & $\mathbf{78.5 \pm 1.7}$ (2) & $78.3 \pm 2.4$ (4)& $76.9 \pm 2.1$ (3) & $77.1 \pm 1.8$ (4) & $76.1 \pm 1.9$ (3) \\
    Acute Inflam. 1 & 6, 2, 120 & $\mathbf{100 \pm 0.0}$ (2) & $\mathbf{100 \pm 0.0}$ (4) & $99.7 \pm 1.2$ (4) & $\mathbf{100 \pm 0.0}$ (3) & $\mathbf{100 \pm 0.0}$ (2)\\
    Acute Inflam. 2 & 6, 2, 120 & $\mathbf{100 \pm 0.0}$ (2) & $\mathbf{100 \pm 0.0}$ (4) & $99.0 \pm 2.6$ (2) & $99.0 \pm 2.6$ (2) & $\mathbf{100 \pm 0.0}$  (2)\\
    Car Evaluation & 6, 4, 1728 & $\mathbf{93.3 \pm 2.2}$ (4) & $92.1 \pm 2.4$ (4) & $84.5 \pm 1.5$ (4) & $84.3 \pm 1.4$ (4) & $86.1 \pm 1.3$ (4) \\
    
    Breast Cancer & 9, 2, 683 & $\mathbf{97.2 \pm 1.3}$ (2) & $\mathbf{97.2 \pm 1.2}$ (2) & $94.7\pm 1.6$ (3) & $94.7 \pm 1.7$ (3) & $95.5 \pm 1.8$ (2) \\
    Avila Bible & 10, 12, 10430 & $\mathbf{62.2 \pm 1.4}$ (4) & $59.7 \pm 1.8$ (4) & $55.8 \pm 0.8 $ (4) & $54.0 \pm 1.3$ (4) & $59.6 \pm 0.7$ (4) \\
    Wine Quality Red & 11, 6, 1599 & $ \mathbf{58.6 \pm 2.2}$ (3) & $56.6 \pm 1.4$ (4)& $56.9 \pm 2.5$ (4) & $55.9 \pm 2.3$ (4) & $55.8 \pm 2.2$ (2) \\
    Wine Quality White & 11, 7, 4898 & $\mathbf{53.5 \pm 1.4}$ (4) & $52.1 \pm 1.6$ (4) & $52.3 \pm 1.4$ (4) & $52.0 \pm 1.3$ (4) & $51.4 \pm 1.2 $ (2) \\ 
    Dry Bean & 16, 7, 13611 & $\mathbf{91.4 \pm 0.5}$ (4) & $89.0 \pm 1.6$ (4) & $83.2 \pm 1.5$ (4) & $80.5 \pm 1.9$ (4) & $77.9 \pm 4.7$ (4) \\
    Climate Crashes & 18, 2, 540 & $\mathbf{92.9 \pm 1.4}$ (2) & $92.4 \pm 2.4$ (3) & $90.6 \pm 2.2$ (3) & $91.8 \pm 1.8$ (4) & $91.5 \pm 2.0$ (2) \\
    Conn. Sonar & 60, 2, 208 & $\mathbf{82.1 \pm 5.1}$ (4) & $80.8 \pm 5.3$ (4) & $70.9 \pm 5.8$ (4) & $70.6 \pm 6.6$ (4) & $71.7 \pm 6.7$ (4) \\
    Optical Recognition & 64, 10, 3823 & $\mathbf{93.3 \pm 1.0}$ (4) & $91.9 \pm 1.0$ (4) & $64.6 \pm 6.5$ (4) & $53.2 \pm 3.2$ (4) & $65.2 \pm 2.0$ (4) \\
    \midrule
    \textit{\textbf{Avg $\%$ distance}} & \textit{\textbf{to best}} $\downarrow$ & $0.0\%$ & $1.4\%$ &  $7.4\%$ & $9.2\%$ & $7.3\%$ \\
    \bottomrule
    \end{tabular}
    }
    \label{tab:classification2}
\end{table*}

\begin{table*}[b!]
\caption{Percentage accuracy (higher is better) on {\bf classification tasks} for the datasets reported in \citep{carreira2018alternating}. 
For each dataset, we provide the number of features $N_f$, classes $N_c$, and training samples $N_s$. All methods use the tree height fixed in \citep{dgt2022} (except CART, which has no predefined height). 
Averaged accuracy $\pm$ std is reported over 10 runs. The results for DGT and CART are from \citep{dgt2022}, and the results of TAO are from \citep{carreira2018alternating}.}
\centering
\resizebox{0.9\textwidth}{!}{%
\begin{tabular}{ll|cccc|c}
\toprule
Dataset & $N_f$, $N_c$, $N_s$ & Height & \dt & DGT & TAO & CART \\
\midrule
Protein & 357, 3, 14895 & 4 & $\mathbf{68.60 \pm 0.22}$ & $67.80 \pm 0.40$ & $68.41 \pm 0.27$ & $57.53 \pm 0.00$ \\
SatImages & 36, 6, 3104 & 6 & $\mathbf{87.55 \pm 0.59}$ & $86.64 \pm 0.95$ & $87.41 \pm 0.33$ & $84.18 \pm 0.30$ \\
Segment & 19, 7, 1478 & 8 & $\mathbf{96.10 \pm 0.53}$ & $95.86 \pm 1.16$ & $95.01 \pm 0.86$ & $94.23 \pm 0.86$ \\
Pendigits & 16, 10, 5995 & 8 & $\mathbf{97.02 \pm 0.32}$ & $96.36 \pm 0.25$ & $96.08 \pm 0.34$ & $89.94 \pm 0.34$  \\
Connect4 & 126, 3, 43236 & 8 & $\mathbf{82.03 \pm 0.39}$ & $79.52 \pm 0.24$ & $81.21 \pm 0.25$ & $74.03 \pm 0.60$ \\
MNIST & 780, 10, 48000 & 8 & $\mathbf{96.16 \pm 0.14}$ & $94.00 \pm 0.36$ & $95.05 \pm 0.16 $ & $85.59 \pm 0.06$ \\
SensIT & 100, 3, 63058 & 10 & $\mathbf{84.29 \pm 0.11}$ & $83.67 \pm 0.23$ & $82.52 \pm 0.15$ & $78.31 \pm 0.00$ \\
Letter & 16, 26, 10500 & 10 & $\mathbf{89.19 \pm 0.29}$ & $86.13 \pm 0.72$ & $87.41 \pm 0.41$ & $70.13 \pm 0.08$ \\
\midrule
\textit{\textbf{Avg $\%$}} & \textit{\textbf{dist. to best}} $\downarrow$ & & $0.0\%$ & $1.6\%$ &  $1.1\%$ & $9.8\%$ \\

\bottomrule
\end{tabular}
}
\label{tab:classification1}
\end{table*}

\medskip
\noindent
\textbf{Results on Concise DTs.} 
We now turn to the 14 classification tabular datasets used in CRO-DT \citep{ea2023}. Global search methods such as CRO-DT \citep{ea2023} are efficient only for small DTs (here, up to depth 4, i.e., 32 nodes). We sort the benchmarks by the number of features, since handling more features is increasingly difficult, particularly for small DTs. Table~\ref{tab:classification2} reports the (average) score over 100 DTs trained with different seeds for the most accurate height (up to $4$). Across all benchmarks, \dt consistently produces the most accurate DTs, with the largest improvement observed on {\em Dry Beans}, where the classification error decreases from $11\%$ to $8.6\%$. DGT ranks second overall, except on the two wine quality benchmarks, where TAO outperforms it. Typically, DGT comes close to \dt, which is expected given their similarity in design, though the use of approximations (STEs, quantized gradient descent) makes DGT perform on average $0.85\%$ worse than \dt. Notably, this gap increases to $1.3\%$ on the four benchmarks with the most features (the most challenging cases). Moreover, DGT often requires larger trees than \dt to achieve its best results (in $6$ out of $14$ benchmarks). When compared with non–gradient-based methods, the advantage of \dt is striking: $5.5\%$ on average, rising to $12\%$ on the four benchmarks with the most features. The largest margin is observed in {\em Optical Recognition}, where the classification error drops from $34.8\%$ to only $6.7\%$.

\medskip
\noindent
\textbf{Results on Denser DTs.}
We then consider the 8 classification tasks originally introduced in TAO \citep{carreira2018alternating} and later used in \citep{dgt2022}. All of these involve tabular datasets, except for MNIST, which is a small image dataset.
We follow the fixed tree heights specified in \citep{carreira2018alternating} (and reused in \citep{dgt2022}). The benchmarks are sorted by tree height, which serves as a good indicator of task complexity. Table~\ref{tab:classification1} reports the (average) accuracy over 10 DTs trained with different seeds.
Consistent with the results in Table~\ref{tab:classification2} for shallower DTs, \dt achieves the highest accuracy across all benchmarks with deeper DTs as well.
With deeper trees, TAO improves and ranks second in 5 tasks, compared to 3 for DGT. On average, \dt outperforms TAO by $1\%$, with the gap increasing to $1.4\%$ on the 4 benchmarks with deeper trees. The largest differences are observed in {\em Letter}, where the classification error decreases from $12.6\%$ to $10.8\%$, and in {\em Pendigits}, from $3.9\%$ to $3\%$. Relative to DGT, \dt achieves an average improvement of $1.4\%$, which grows to $2.1\%$ on the four benchmarks with deeper trees, larger than the margin observed in smaller trees from Table~\ref{tab:classification2}. The most pronounced improvements occur in {\em Letter}, with error reduced from $13.9\%$ to $10.8\%$, and in {\em MNIST}, from $6\%$ to $3.9\%$.
Finally, a Friedman–Nemenyi test across all classification benchmarks at a significance level of $0.05$ confirms that \dt is significantly better than all other methods. The average ranks are: 1.11 for \dt, 2.25 for DGT, 2.86 for TAO, 3.60 for CRO-DT, and 3.77 for CART.


\medskip
\noindent
\textbf{Training Time.}
We report in Table \ref{tab:time} the training time for the architecture on MNIST, as \citep{carreira2018alternating} reports this number for TAO. We train other architectures on comparable computing configurations. We provide the accuracy number obtained for reference. We also report the training time for the simpler DryBean, for comparison sake, except for TAO, for which this is not available. 
First, CRO-DT for MNIST takes much longer to train (number of generations set to the default 4k) than other architectures while having very low accuracy numbers ($<60 \%$ instead of $>90\%$), the reason why we do not consider it for these benchmarks with deeper DTs (indeed, \citep{ea2023} does not report CRO-DT results for these benchmarks). The training times of DGT and \dt are almost identical due to their architectural similarities. As expected, training with gradient-based methods is observed to be significantly faster than non-gradient learning methods.

\begin{table}[h!]
\caption{Training times in seconds (lower is better), and (avg accuracy $\%$ (higher is better)). The MNIST training time from the non-publicly available TAO is quoted from \citep{carreira2018alternating}, while we train other architecture on a similar compute configuration. The tree height is 8 for MNIST and 4 for DryBean.}
\centering
\resizebox{0.55\textwidth}{!}{%
    \begin{tabular}{lllll}
    \toprule
    Dataset & \dt & DGT & TAO & CRO-DT \\
    \midrule
    MNIST & $306$ ($96.1$) & $288$ ($94.0$) & $1200$ ($95.0$) & $4659$ ($58.2$) \\
    DryBean & $4.4$ ($91.4$) & $3.8$ ($89.0$) & NA ($83.2$) & $1300$ ($77.9$) \\
    \bottomrule
    \end{tabular}
}   
\label{tab:time}
\end{table}

\medskip
\noindent
\textbf{Comparision with DGT.}
To better understand the difference in performance due to the difference in architecture used by \dt and DGT, we resort to two difference studies. 
First, the loss landscape \citep{li2018visualizing} (see Fig. \ref{fig:loss_ls}) to understand how easy 
learning is for a given architecture; and generalizability potential, by considering the difference between train and test accuracies (in Table \ref{tab:genralization}).
We restrict to the 4 datasets from Table \ref{tab:classification2} for which there is a significant difference ($>1$\%) in accuracy on the test data, namely SatImages, Connect4, MNIST and Letter. For both studies, we rerun the algorithms on the datasets, implying a small deviation in test accuracy with the results from Table \ref{tab:classification2}. The training times are reported in brackets in Table \ref{tab:genralization}.
Fig. \ref{fig:loss_ls} shows the loss landscape around the final trained parameters along two random vector directions of training parameters. 
A flatter loss landscape indicates that changing the trained parameters does not significantly impact the (trained) accuracy, leaving flexibility to adjust the parameters to accommodate updates during training without affecting the current accuracy.
On the contrary, a very steep loss landscape means that parameters cannot be changed without worsening the loss, so they must be tuned very precisely. In Fig.~\ref{fig:loss_ls}, the loss landscapes of SatImages and MNIST are much flatter for \dt than for DGT, while those of Connect4 and Letter are less flat.
The fact that \dt is easier to train than DGT on MNIST and SatImages results in better training accuracy (Table \ref{tab:genralization}). This largely explains the better {\em test} accuracy; generalizability is not improved, as computed by the difference in accuracy between train and test. The case is similar for Letter: although the loss landscape is not flat, there is still sufficient parameter freedom for \dt to achieve high training accuracy ($95\%$), much better than DGT. The last case is Connect4: here, the loss landscape is equally not flat for \dt and DGT, and the training accuracy is only average, around $85\%$. This time, \dt generalizability is superior to DGT, accounting for half of the $2\%$ improvement in test accuracy.

\begin{figure}[t!]
  \centering 
  \begin{subfigure}{0.49\textwidth}
      \centering
      \includegraphics[scale=0.22]{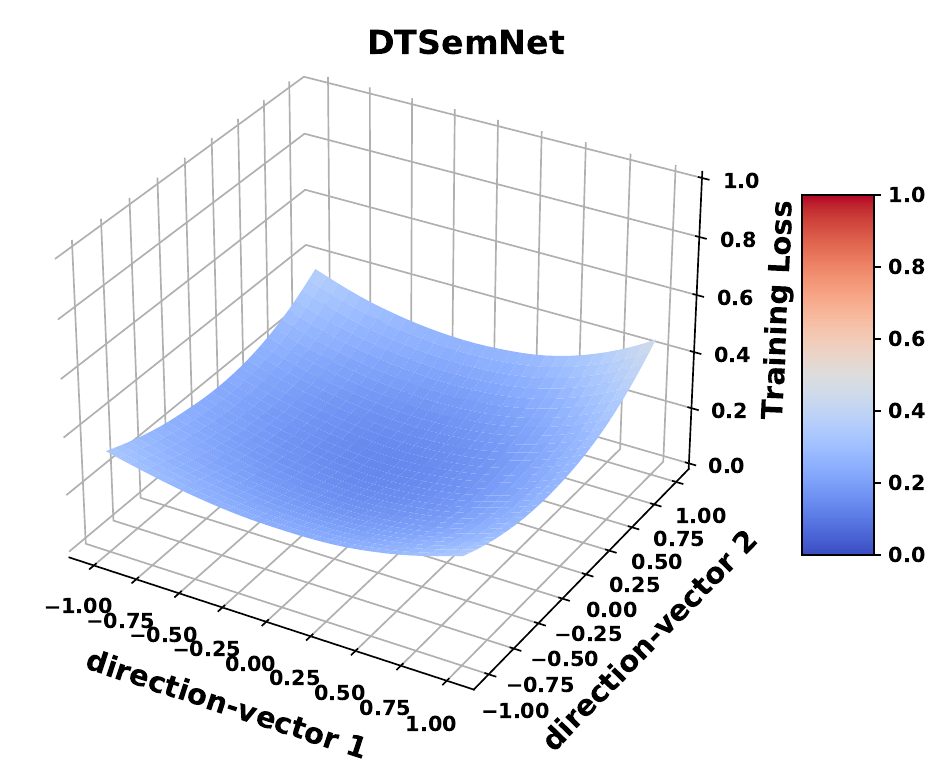}
      \includegraphics[scale=0.22]{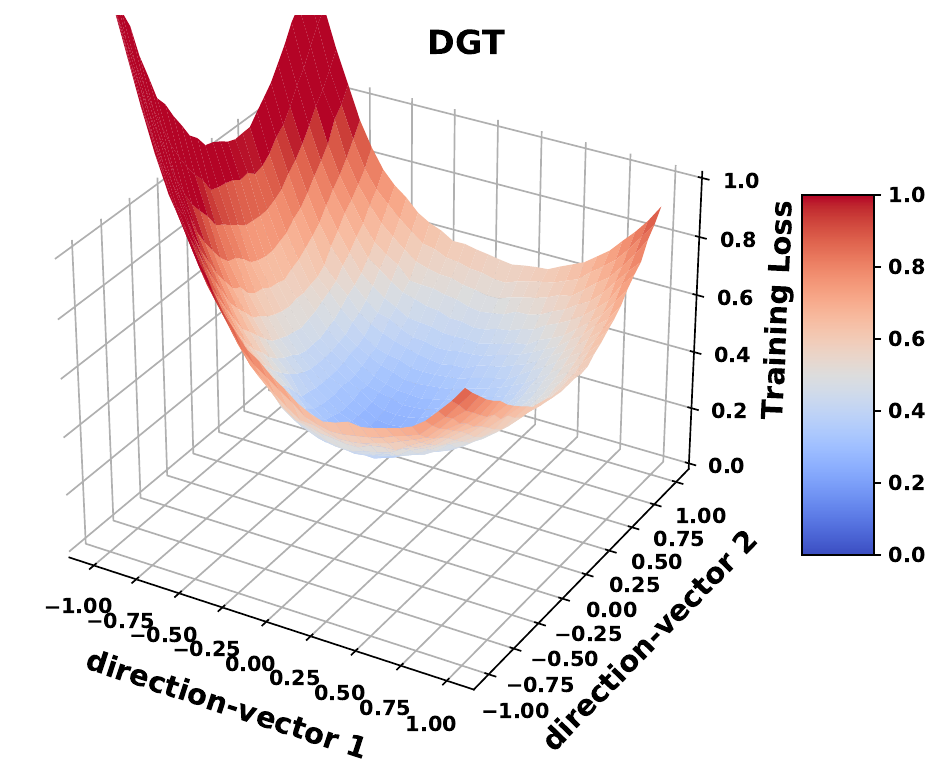}
      \caption{SatImages}
  \end{subfigure}
  \hfill
  \begin{subfigure}{0.49\textwidth}
      \centering
      \includegraphics[scale=0.22]{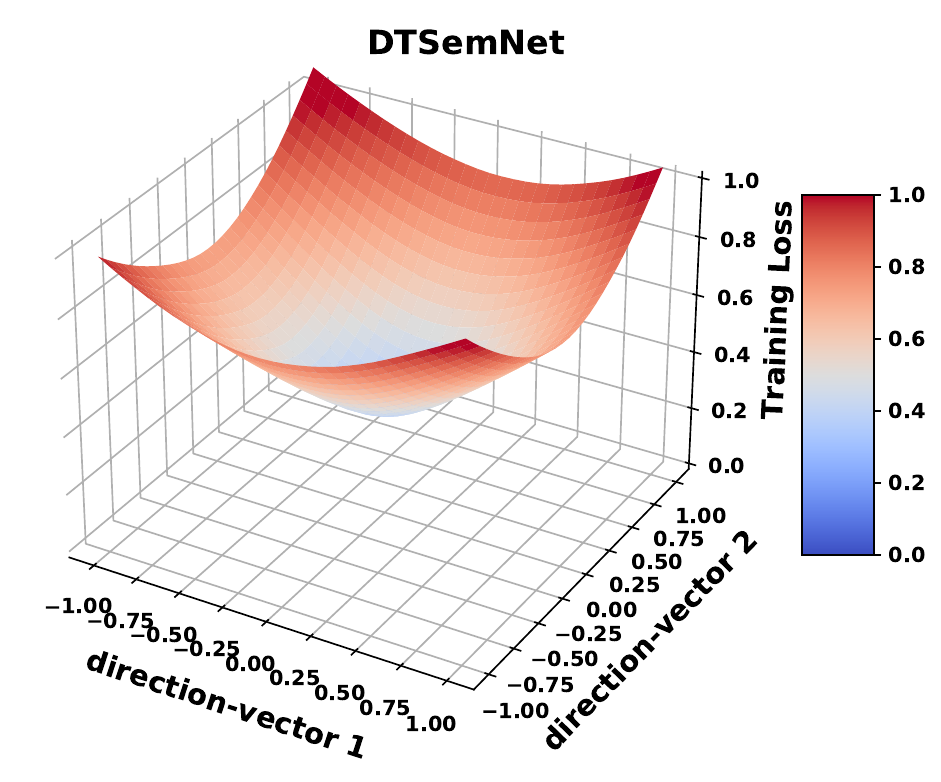}
      \includegraphics[scale=0.22]{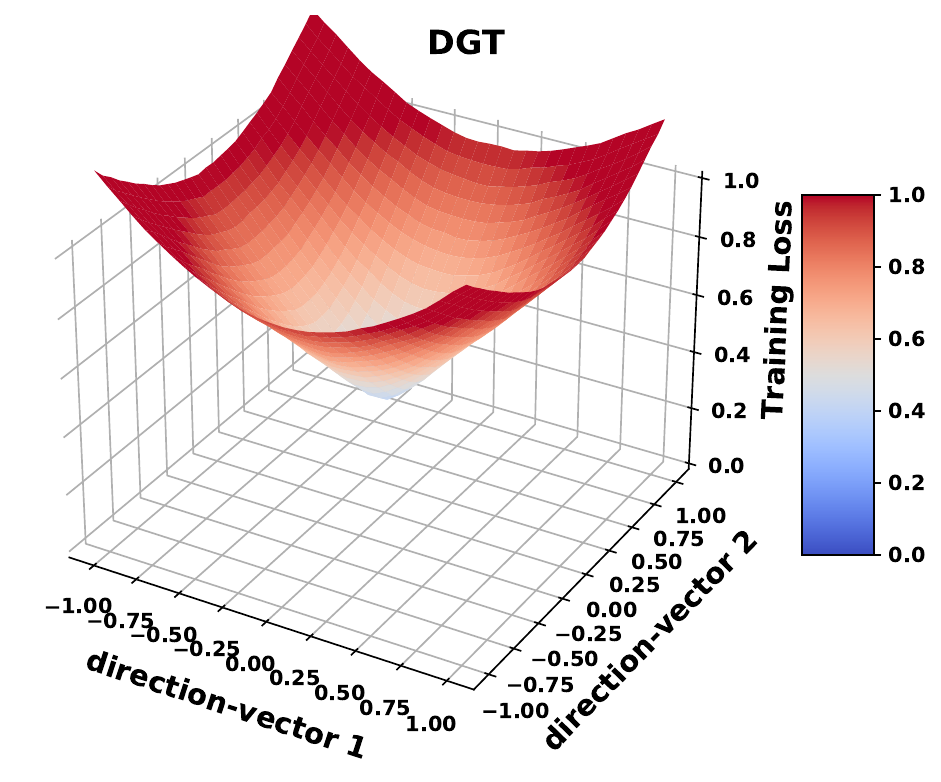}
      \caption{Connect4}
  \end{subfigure}

  \begin{subfigure}{0.49\textwidth}
      \centering
      \includegraphics[scale=0.22]{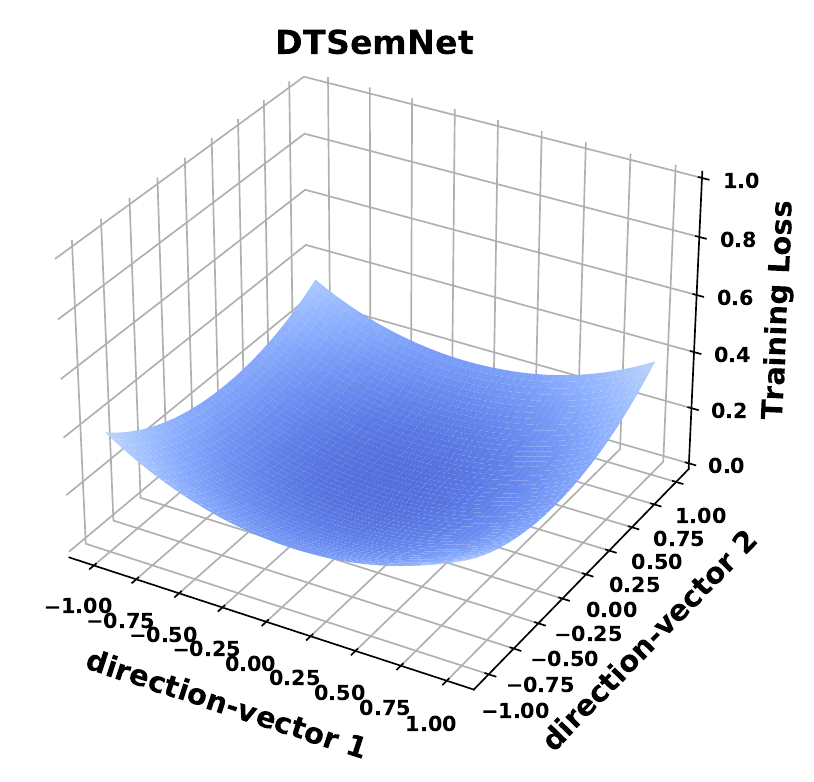}
      \includegraphics[scale=0.22]{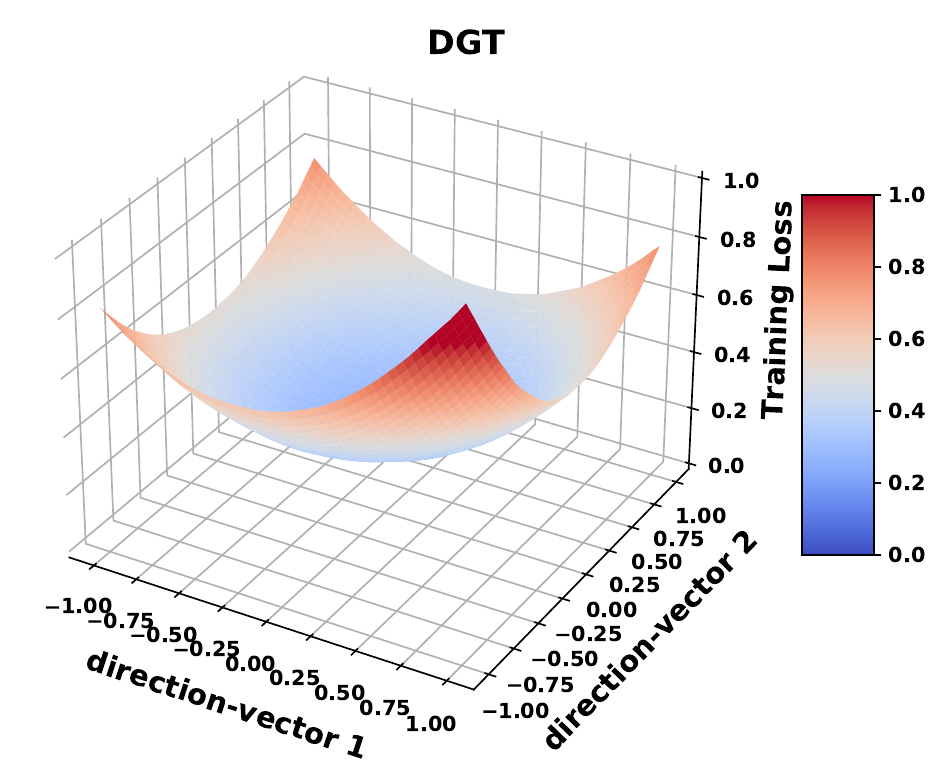}
      \caption{MNIST}
  \end{subfigure}
  \hfill
   \begin{subfigure}{0.49\textwidth}
      \centering
      \includegraphics[scale=0.22]{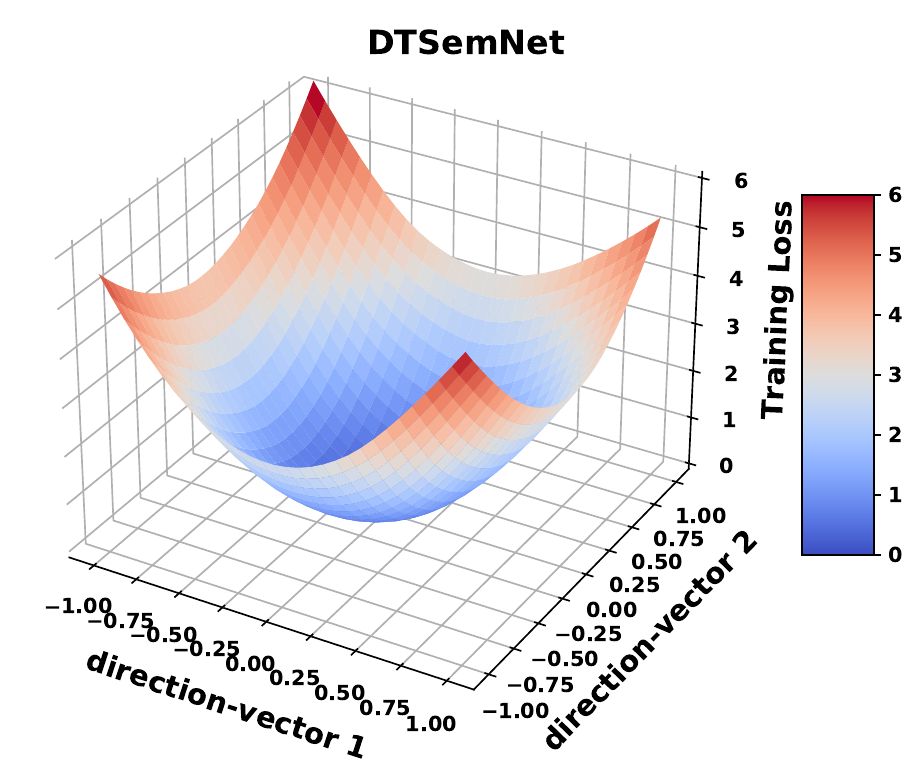}
      \includegraphics[scale=0.22]{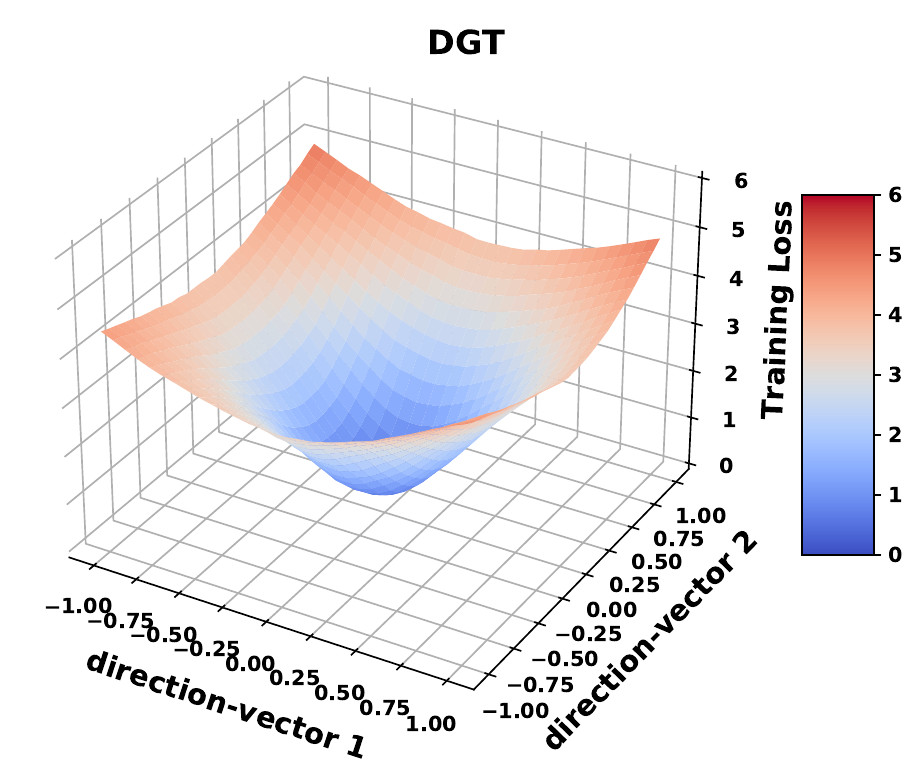}
      \caption{Letter}
  \end{subfigure}

  \caption{The loss landscape of \dt and DGT for different datasets. A flatter landscape allows flexible parameter updates without affecting training loss.}
  \label{fig:loss_ls}
\end{figure}

\begin{table*}[t!]
\caption{Performance comparison of \dt and DGT in terms of train and test accuracy. Training time is provided in brackets. The results for DGT are obtained on our machine.
}
\centering
\resizebox{0.85\textwidth}{!}{%
\begin{tabular}{l|cc|cc|cc}
\toprule
 & \multicolumn{2}{c|}{Train Accuracy} & \multicolumn{2}{c|}{Test Accuracy} & \multicolumn{2}{c}{Difference: (Train Acc. -- Test Acc.)} \\
Dataset & \dt & DGT & \dt & DGT & \dt & DGT \\
\midrule

SatImages & $\mathbf{95.30 \pm 0.60}$ (42s) & $92.22 \pm 0.80$ (25s) & $\mathbf{87.55 \pm 0.59}$ & $86.08 \pm 0.98$ & $7.75$ & $6.14$ \\

Connect4 & $\mathbf{85.86 \pm 0.35}$ (352s) & $84.78 \pm 0.36$ (598s) & $\mathbf{82.03 \pm 0.39}$ & $80.06 \pm 0.30$ & $3.83$ & $4.72$  \\

MNIST & $\mathbf{98.78 \pm 0.06}$ (306s) & $96.93 \pm 0.21$ (288s) & $\mathbf{96.16 \pm 0.14}$ & $94.37 \pm 0.22$ & $2.62$ & $2.56$ \\

Letter & $\mathbf{95.04 \pm 0.91}$ (381s) & $89.75 \pm 0.66$ (282s) & $\mathbf{89.19 \pm 0.29}$ & $84.52 \pm 0.48$ & $5.85$ & $5.23$ \\

\bottomrule
\end{tabular}
}
\label{tab:genralization}
\end{table*}

\medskip
\noindent
\textbf{Comparison with Adaptive Neural Tree.}
We compare the performance of \dt, a hard oblique DT, with Adaptive Neural Tree (ANT) \citep{tanno2019adaptive}. While ANT has a DT strucutre, it uses NNs at decision nodes, whereas \dt uses linear expressions. This is a major difference, making ANT more suitable at analyzing complex shapes (e.g. MNIST benchmark). Further, 
ANT usually produces soft (probabilistic) DTs, although it could be configured to produce hard DT (but still with NN at decision nodes), namely ANT-Hard which we compare with. While NNs increase node capacity, they negate the advantage of simple linear expressions, which are particularly appreciated for their simplicity, interpretability and amenability to verification.
In particular, for MNIST, the average number of decision nodes in the learned tree is 2, which is basically two convolutional layers (at internal nodes) followed by a 2-layer ReLU-activated classification head (at leaf nodes), explaining the similar accuracy for ANT-Soft and ANT-Hard.
As shown in Table \ref{tab:ant}, \dt compares favorably with ANT-Hard, except for MNIST for the reason described above. Still, \dt is overall more accurate than ANT-Hard ($2.7\%$ on average), despite the limited capacity. We provide numbers for ANT-Soft for reference: the performance of \dt is close to ANT-Soft, despite its lower capacity, hard DT and simpler structure.

\begin{table*}[t!]
\caption{Performance comparison with ANT from \cite{tanno2019adaptive}. Test accuracy (training time) is reported for \dt, ANT-Hard, and ANT-Soft averaged over 10 seeds. ANT-Soft is a soft DT, while ANT-Hard is obtained by hardening ANT-Soft.
}
\centering
\resizebox{0.65\textwidth}{!}{%
\begin{tabular}{l|cc|c}
\toprule
Dataset &\dt & ANT-Hard & ANT-Soft \\
\midrule

Protein & $68.60 \pm 0.22$ (43s)  & $\mathbf{68.81 \pm 0.07}$ (12s) & $68.81 \pm 0.07$  \\ 

SatImages & $\mathbf{87.55 \pm 0.59}$ (41s)  & $83.73 \pm 13.53$ (19s) & $88.15 \pm 0.58$  \\ 

Segment & $\mathbf{96.10 \pm 0.53}$ (3s)  & $91.83 \pm 12.74$ (20s)  & $96.32 \pm 1.12$  \\ 

Pendigits & $\mathbf{97.02 \pm 0.32}$ (114s)  & $95.58 \pm 3.05$ (101s)  & $96.36 \pm 0.69$  \\ 

Connect4 & $\mathbf{82.03 \pm 0.39}$ (352s) & $80.94 \pm 1.92$ (613s)  & $83.01 \pm 0.33$  \\ 

MNIST & $96.16 \pm 0.14$ (306s) & $\mathbf{98.14 \pm 0.08}$ (538s)  & $98.14 \pm 0.13$  \\ 

SensIT & $\mathbf{84.29 \pm 0.11}$ (1200s) & $76.36 \pm 10.32$ (1233s) & $84.20 \pm 0.27$  \\ 

Letter & $\mathbf{89.19 \pm 0.29}$ (381s) & $87.92 \pm 3.10$ (588s) & $91.09 \pm 0.95$  \\ 

\midrule

\textit{\textbf{Avg $\%$}} \textit{\textbf{dist. to best} $\downarrow$ } & $0.3\%$ & $3.0\%$ &  \\

\bottomrule

\end{tabular}
}
\label{tab:ant}
\end{table*}

\subsection{RL Tasks:} \label{results-rl}

\begin{table*}[b!]
\caption{Average rewards (higher is better) on {\bf RL tasks} $\pm$ std over 5 policies generated with different learning seeds. The policies are evaluated over 100 episodes. We report the number $N_f$ of features and 
$N_a$ of actions for each environment. The first four environments 
have discrete actions, and the bottom two continuous actions, for which we test both the original ({\em scalar}) version of DGT and DGT-linear ({\em linear}). For the 5 first environments, the {\em Height} is fixed for all architectures. For Bipedal Walker, DGT(-linear) performed much better with deeper DTs, while ICCT performed much better with shallower DTs (\dt was not much impacted by the height). For Bipedal Walker, the heights yielding the best rewards for DGT(-linear) and ICCT are shown in parentheses.}
  \centering
  \resizebox{0.92\textwidth}{!}{%
  \begin{tabular}{llcccccc}
   \toprule
   Environments & $N_f$, $N_a$ & {\em Height} & \dt & Deep RL & DGT & ICCT & VIPER \\ 
   \midrule
    CartPole & 4, 2 & 4 & $\mathbf{500 \pm 0}$ & $\mathbf{500 \pm 0}$ & $\mathbf{500 \pm 0}$ & $496 \pm 0.3$ & $499.95 \pm 0.05$ \\
    Acrobot & 6, 3 & 4 & $\mathbf{-82.5 \pm 1.05}$ & $-84 \pm 0.84$ & $-83.1 \pm 1.88$  & $-88.6 \pm 1.77$ & $-83.92 \pm 1.59$ \\
    LunarLander & 8, 4 & 5 & $\mathbf{252.5 \pm 3.9}$ & $245 \pm 14.5$ & $183.6 \pm 14.6$ & $-85 \pm 16.3$ & $86.73 \pm 7.93$ \\
    Zerglings & 32, 30 & 6 & $\mathbf{15.54 \pm 2.07}$ & $10.47 \pm 0.23$ & $8.21 \pm 1.03$ & $9.40 \pm 1.10$ & $10.61 \pm 0.46$ \\
    \toprule
    Cont. LunarLander & 8, 2 dim. & 4 & \begin{tabular}[l]{@{}l@{}} STE: $277.24 \pm 2.09$ \\ Top-$k$: $\mathbf{282.49 \pm 3.46}$ \end{tabular} & $276.12 \pm 1.45$ & \begin{tabular}[l]{@{}l@{}} {\em scalar}: $131.92 \pm 51.49$ \\ {\em linear}: $267.9 \pm 9.37$ \end{tabular} & $255.57 \pm 4.19$ & NA \\
    Bipedal Walker & 24, 4 dim. & 7 & \begin{tabular}[l]{@{}l@{}} STE: $314.98 \pm 3.35$ \\ Top-$k$: $\mathbf{325.35 \pm 0.79}$ \end{tabular} & $315.3 \pm 6.91$ & \begin{tabular}[l]{@{}l@{}} {\em scalar}: $78.33 \pm 57.19$ (8)\\ {\em linear}: $244.5 \pm 61.84$ (8) \end{tabular} & $301.34 \pm 3.09$ (6) & NA \\
\bottomrule
\end{tabular}
}
\label{tab:rl}
\end{table*}

We evaluate our approach on a range of RL environments. For discrete-action tasks with limited features ($\leq 32$), we consider CartPole (4 features, 2 actions), Acrobot (6 features, 3 actions), LunarLander (8 features, 4 actions), and FindandDestroyZerglings (32 features, 30 actions) \citep{starcraft}. For continuous-action tasks, we include continuous LunarLander (8 features, 2 action dimensions) and BipedalWalker (24 features, 4 actions). For all tasks, we fix the DT height across architectures and compare against Deep RL baselines with NNs matched in the number of learned parameters. We train 5 policies per environment using different seeds, applying PPO \citep{schulman2017proximal} for discrete actions and SAC \citep{sac} for continuous actions, and evaluate each on 100 test seeds to compute mean rewards, reported in Table~\ref{tab:rl}. For \dt Top-$k$, we adopt an annealing schedule for $k$ from $4 \to 1$, where each SAC training step begins with Top-$4$ for a specified number of gradient updates, followed by Top-$1$ with augmented samples, and concludes with Top-$1$ on actual samples. At the end of each training step, this yields a \dt Top-$1$ model corresponding to a hard DT, which is then used for interaction with the environment.

\dt demonstrates competitive performance with Deep RL on environments with limited state dimensions while consistently generating the most efficient DT policies. The performance advantage increases with environment complexity: tied for first on CartPole, leading by several percent on Acrobot, achieving substantial leads on discrete LunarLander ($\geq 28\%$) and Zerglings ($\geq 39\%$) compared to other DT architectures. While VIPER achieves comparable performance on LunarLander, it requires significantly larger trees ($>1300$ leaves vs $32$ for our method).

Continuous action spaces prove easier to handle than discrete cases, with all regressor-based architectures achieving higher rewards and showing more consistent performance across methods. This confirms that architecture choice is less critical when policies can use linear actuators rather than fixed discrete actions. The approximations used by DGT and ICCT, particularly the multiple uses of STEs, lead to reduced rewards. While \dt with a single STE shows some improvement, STE approximations still degrade performance. By avoiding these approximations, \dt with Top-$k$ maintains superior performance across environments, outperforming NN baselines by about 2\% on Continuous Lunar Lander and 3\% on Bipedal Walker, thereby demonstrating the advantage of approximation-free training of DTs.

\section{Conclusion}\label{sec:conslusion}

We proposed \dt, a differentiable architecture that is semantically equivalent to oblique DTs and can be trained end-to-end with standard gradient descent. For classification, this enables an exact, approximation-free procedure. For regression, we introduced an annealed Top-$k$ approach that overcomes the limitations of STE approximation, which suffers from biased gradients due to objective mismatch and instability from inconsistent parameter updates. By retaining forward and backward semantics during training, the annealed Top-$k$ approach ensures approximation-free learning. Together, these contributions make \dt the first architecture to achieve approximation-free training of oblique DTs in both classification and regression.
Our experiments show that \dt consistently produces more accurate trees than competing differentiable approaches. In classification, \dt reduces error by more than 10\% on harder benchmarks, establishing a clear margin over state-of-the-art methods. In regression, it achieves an improvement of over 3\% (on average) compared to baselines such as TAO-Linear, with substantially larger gains ($ > 10\%$) on challenging datasets. At the same time, compared to non-gradient-based methodologies (such as greedy, non-greedy, and global search approaches) for learning DTs, gradient-based methods are significantly faster. 
Finally, we demonstrated the application of \dt as a programmatic policy in RL, achieving performance comparable to or better than NN policies across both discrete and continuous action spaces. These results show that approximation-free differentiable trees combine the interpretability of symbolic models with the scalability of gradient-based training, opening promising directions for deploying interpretable yet high-performing models in safety-critical domains.

\newpage

\noindent {\bf Limitations:} \dt is a DT, making it unsuitable for high-dimensional inputs like images, where DTs struggle with complex shapes and require many leaves, negating their benefits. 

\medskip

\noindent {\bf Future Work:} For \dt, the choice of the height of the DT is treated as a hyperparameter, similar to the choice of the number of layers in an NN, in contrast to methods (e.g. CART \citep{breiman1984cart}) that grow trees height. For future work, we will consider developing differentiable methods for adaptive growth and pruning.


\acks{This research was conducted as part of the DesCartes program and was supported by the National Research Foundation, Prime Minister’s Office, Singapore, under the Campus for Research Excellence and Technological Enterprise (CREATE) program. This research/project is also supported by the National Research Foundation, Singapore and DSO National Laboratories under the AI Singapore Programme (AISG Award No: AISG2-RP-2020-017). 
The second author is partly supported by ANR-23-PEIA-0006 SAIF.
The computational work for this research was partially performed using resources provided by the NSCC, Singapore.}

\bibliography{25-2047}

\end{document}